\documentclass{article}
\PassOptionsToPackage{numbers}{natbib}
\usepackage[preprint]{neurips_2026}

\usepackage[utf8]{inputenc}
\usepackage[T1]{fontenc}
\usepackage{hyperref}
\usepackage{url}
\usepackage{booktabs}
\usepackage{amsfonts}
\usepackage{amsmath}
\usepackage{amssymb}
\usepackage{nicefrac}
\usepackage{microtype}
\usepackage{xcolor}
\usepackage{graphicx}
\usepackage{algorithm}
\usepackage{algorithmic}
\usepackage{multirow}
\usepackage{subcaption}
\usepackage{wrapfig}
\usepackage{amsthm}
\newtheorem{theorem}{Theorem}
\usepackage{xspace}
\usepackage{enumitem}
\usepackage[most]{tcolorbox}
\definecolor{promptbgcolor}{RGB}{245,245,245} 
\definecolor{promptframecolor}{RGB}{180,180,180} 

\newtcolorbox{promptbox}[1][]{
  enhanced,
  boxrule=0.5pt,
  colback=promptbgcolor,
  colframe=promptframecolor,
  fonttitle=\bfseries,
  coltitle=black,
  attach boxed title to top left={yshift=-2mm, xshift=2mm},
  boxed title style={
    boxrule=0pt,
    colframe=white,
    colback=white,
  },
  title={#1},
  arc=2mm, 
  breakable, 
}

\newcommand{\method}{PACEvolve++}

\title{\method{}: Improving Test-time Learning for Evolutionary Search Agents}

\author{%
  \textbf{Minghao Yan}$^{1,2,\dagger}$ \quad
  \textbf{Bo Peng}$^{1}$ \quad
  \textbf{Benjamin Coleman}$^{3}$ \quad
  \textbf{Ziqi Chen}$^1$ \quad
  \textbf{Zhouhang Xie}$^3$ \\
  \textbf{Shuo Chen}$^3$ \quad
  \textbf{Zhankui He}$^3$ \quad
  \textbf{Noveen Sachdeva}$^3$ \quad
  \textbf{Weili Wang}$^1$ \quad
  \textbf{Ed H. Chi}$^3$ \\
  \textbf{Shivaram Venkataraman}$^2$ \quad
  \textbf{Wang-Cheng Kang}$^3$ \quad
  \textbf{Derek Zhiyuan Cheng}$^3$ \quad
  \textbf{Beidou Wang}$^1$ \\
  $^1$Google \quad
  $^2$University of Wisconsin--Madison \quad
  $^3$Google DeepMind \\
  $^\dagger$Work done during an internship at Google.
}

\begin{document}

\maketitle

\begin{abstract}
Large language models have become drivers of evolutionary search, but most systems rely on a fixed, prompt-elicited policy to sample next candidates. This limits adaptation in practical engineering and research tasks, where evaluations are expensive, and progress depends on learning task-specific search dynamics. We introduce \method{}, an advisor-model reinforcement learning framework for test-time policy adaptation in evolutionary search agents. \method{} decouples strategic search decisions from implementation: a trainable advisor generates, assesses, and selects hypotheses, while a stronger frontier model translates selected hypotheses into executable candidates. To train the advisor under non-stationary feedback, we propose a phase-adaptive approach that adapts its optimization strategy to different phases of the evolutionary process. Early in evolution, it uses group-relative feedback to learn broad search preferences; later, as reward gaps compress, it emphasizes best-of-$k$ frontier contribution to support stable refinement. Across expert-parallel load balancing, sequential recommendation, and protein fitness extrapolation, \method{} outperforms the state-of-the-art evolutionary search framework with frontier models, achieving faster convergence and stabilizing test-time training during evolutionary search.
\end{abstract}

\section{Introduction}

Large language models (LLMs) have recently emerged as effective drivers of evolutionary program search, enabling autonomous discovery for open-ended optimization problems~\cite{funsearch, shinkaevolve, openevolve}. In this paradigm, an agent repeatedly inspects the current best solution, its evaluation metrics, and the search history, proposes candidate mutations, and retains the best-performing descendant. This simple loop has proved remarkably effective: AlphaEvolve~\cite{alphaevolve} demonstrated state-of-the-art algorithm discovery in domains such as bin packing, matrix multiplication, and circle packing, while subsequent open-weight systems extended these gains to symbolic regression and kernel optimization~\cite{shojaee2024llm, shinkaevolve}. More recent systems improve the external mechanics of this loop through stronger context management, backtracking, population maintenance, and self-adaptive workflows~\cite{yan2026pacevolve, cemri2026adaevolve, liu2026evox}. These advances make long-horizon search substantially more reliable. Still, they typically rely on a fixed-parameter, prompt-elicited reasoning policy: useful search experience may accumulate in the scaffold, but it is not directly internalized into the model's decision preferences. This leaves a central question open: \emph{how should we adapt an LLM's reasoning policy to make better search decisions during long-horizon evolutionary optimization?}

This need for policy adaptation becomes especially consequential in practical research and engineering tasks~\cite{yang2025reinforcement, chan2024mle, qiang2025mle}. In these domains, effective search decisions often depend on recognizing patterns across previous attempts: which mutation families repeatedly fail, which partial improvements are worth revisiting, and which directions remain novel relative to the evolving frontier. In recommender-system design~\cite{fuxictr, fuxictr2}, MoE load balancing~\cite{agrawal2025gepa, liu2024deepseek}, and protein fitness extrapolation~\cite{tran2026rapid}, candidate directions may range from architectural changes and optimization choices to routing strategies~\cite{cheng2025barbarians}, feature interactions~\cite{wang2021dcn}, and sequence-level transformations~\cite{guo2017deepfm}. Many such directions can be justified by generic LLM reasoning, but only a few produce measurable improvement after evaluation~\cite{liu2025fitness}. A fixed policy can condition on this history through context, but it does not internalize the resulting search feedback into stable decision preferences~\cite{anthony2025language, zhu2025failure, nie2024evolve}. Thus, the key challenge is not merely generating plausible hypotheses, but adapting the model's decision policy to prioritize directions that are novel, feasible, and likely to improve the evolving frontier.

\begin{wrapfigure}{r}{0.5\linewidth}
    \centering
    \vspace{-10pt}
    \includegraphics[width=\linewidth]{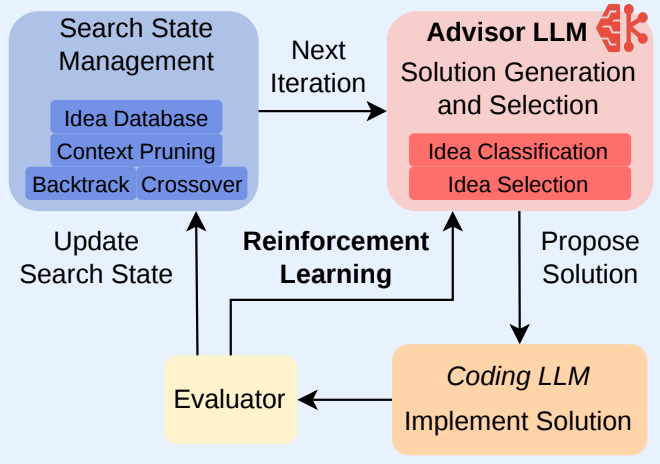}
    \vspace{-5pt}
    \caption{Overall \method{} workflow. A trainable advisor handles idea generation, novelty assessment, and hypothesis selection, while a frontier implementation model writes code. The RL objective is coupled to rollout batches and adapts its credit assignment to the search phase.}
    \vspace{-5pt}
    \label{fig:pacerl}
\end{wrapfigure}

We introduce a dedicated advisor model~\cite{asawa2025train} to make search-specific policy adaptation explicit. The advisor learns the strategic decisions in evolutionary search~\cite{yan2026pacevolve}, such as hypothesis generation, novelty assessment, and mutation selection, while a stronger frontier implementation model translates the selected hypothesis into executable code~\cite{gemini3}. This design departs from standard evolutionary coding frameworks, which often use the same model to both decide what to try and implement the resulting mutation~\cite{wang2025thetaevolve, yuksekgonul2026learning}. Such coupling can be suboptimal in practical research and engineering tasks, where implementation failures can arise from complex codebases, integration details, and system constraints~\cite{yang2026programbench}. In these settings, the search-specific signal lies primarily in deciding which hypothesis is novel, feasible, and likely to improve the evolving frontier, separate from the model's general coding capabilities~\cite{team2026kimi, zeng2026glm}. End-to-end training, therefore, entangles hypothesis quality with implementation correctness, making them noisy signals for adapting search preferences. By isolating the advisor as the trainable decision layer, our framework focuses reinforcement learning on what to evaluate next while leveraging frontier coding models for implementation.

With the advisor model paradigm (\S~\ref{sec:advisor}), the remaining challenge is to learn from feedback whose usefulness evolves over time. Early in search, the policy should be encouraged to explore broad search directions: candidates often differ substantially in mechanism and quality, and group-relative feedback provides an informative signal for learning which mutation families are promising~\cite{liu2025fitness}. Later, however, the search increasingly mutates already strong descendants~\cite{lengler2019drift}, resulting in marginal differences between candidates that make group-relative signals ineffective. We address this with \emph{phase-adaptive RL} (\S~\ref{sec:hybrid}). During early exploration, we aim to incentivize the advisor to identify useful search directions from diverse candidates without prematurely collapsing onto a few high-scoring rollouts (Figure~\ref{fig:main_results}). As search moves toward refinement and reward gaps compress, the objective gradually shifts toward frontier-contribution feedback and assigning credit based on whether a candidate contributes to the evolving best-of-$k$ frontier. This late-stage signal does not simply imitate the highest-scoring rollout; it credits candidates based on their contributions to frontier improvement~\cite{walder2025pass}. The resulting recipe aligns training with the log-diminishing reward structure of evolutionary search dynamics~\cite{cheng2025barbarians}, stabilizing late-stage training while avoiding early-stage exploitation (Theorem~\ref{thm:reward-collapse}), enabling the policy first to learn broad search preferences and then focus on high-value refinements near the frontier (\S~\ref{sec:exp}). 
In summary, we introduce an advisor-model reinforcement learning framework (\S~\ref{sec:workflow}) for self-evolving agents. Our contributions include:
\begin{itemize}[leftmargin=0.5em]
\item We design an advisor-based policy adaptation (\S~\ref{sec:advisor}), where we decouple search-decision learning from code implementation by training an advisor for hypothesis generation, novelty assessment, and mutation selection, while delegating executable-code realization to a stronger frontier implementation model. 
\item We design a search-dynamics-aware reinforcement learning algorithm (\S~\ref{sec:hybrid}) based on this framework. We develop a phase-adaptive recipe that shifts credit assignment from group-relative feedback during exploration to frontier-contribution during refinement, aligning policy learning with evolutionary search dynamics. 
\item Empirically, we demonstrate strong performance across a range of real-world research and engineering tasks (\S~\ref{sec:tasks}), including expert-parallel load balancing~\cite{deepseekai2026deepseekv4}, sequential recommendation~\cite{ye2026fuxi}, and protein fitness extrapolation~\cite{tran2026rapid}, outperforming while converging faster than existing methods with and without RL (\S~\ref{sec:exp}).
\end{itemize}

\section{Background}

\subsection{Evolutionary Search Agents}

An evolutionary search agent improves a program through repeated proposal, evaluation, and selection~\cite{holland1992genetic, fogel1988evolutionary, hornby2006automated}. Given an initial program $p_0$, an evaluator $\mathcal{E}: \mathcal{P} \rightarrow \mathbb{R}$, and a policy $\pi_\theta$, the agent generates candidate modifications, evaluates them, and updates the current solution whenever a higher-scoring descendant is identified. At iteration $t$, the policy conditions on (one of) the current best programs $p_t$, their evaluation metrics, and the accumulated search history to generate candidates $\{p_t^{(1)}, \ldots, p_t^{(n)}\}$. If the candidate scores high, it is then added to a set of the best candidate programs for future reference.

This line of work has progressed along two complementary directions. The first improves the \emph{search scaffold}. FunSearch~\cite{funsearch} and AlphaEvolve~\cite{alphaevolve} showed that strong results can emerge from repeated in-context mutation and selection. At the same time, PACEvolve~\cite{yan2026pacevolve} strengthened long-horizon search through hierarchical context management, momentum-based backtracking, island-style collaboration, and a persistent idea pool. These systems improve how the agent stores, revisits, and coordinates search trajectories over time.
The second direction improves the \emph{policy acting within the search loop}. ThetaEvolve~\cite{wang2025thetaevolve} trains the mutation policy while treating the evolving program database as the environment, showing that this dynamic search state is essential: reinforcement learning from a static starting point performs worse than learning within the non-stationary evolutionary process. TTT-Discover similarly couples policy learning with evolutionary search with an entropic objective~\cite{yuksekgonul2026learning}. These results suggest that reinforcement learning in self-evolving systems should be understood as learning over \emph{search dynamics}, rather than optimizing isolated prompts.

\subsection{Reinforcement Learning in Evolutionary Search Agents}

Two representative self-evolving systems integrate reinforcement learning into an evolutionary search agent. ThetaEvolve~\cite{wang2025thetaevolve} uses a GRPO-style objective to train the mutation policy from grouped candidates sampled from the same search state~\cite{shao2024deepseekmath}. Given rewards $\{R_1, \ldots, R_n\}$, the normalized advantage for sample $i$ is $\hat{A}_i^{\text{GRPO}} =
    \frac{R_i - \bar{R}}{\sigma_R + \epsilon_{\mathrm{num}}},
    \ 
    \bar{R} = \frac{1}{n}\sum_{j=1}^n R_j, \ 
    \sigma_R =
    \sqrt{
        \frac{1}{n}
        \sum_{j=1}^n
        (R_j - \bar{R})^2
    }
$.
TTT-Discover~\cite{yuksekgonul2026learning} instead adopts an entropic reinforcement learning objective with a KL penalty that concentrates gradient mass on exceptional rollouts~\cite{jiang2025risk}. Given rewards $\{R_1, \ldots, R_n\}$, the adaptive inverse temperature $\beta$ is selected such that
$\mathrm{KL}(q_\beta \,\|\, \text{uniform}) = \gamma$,
where \(q_\beta(i) = \exp(\beta R_i) / \sum_{j=1}^n \exp(\beta R_j)\), and the leave-one-out advantage for sample $i$ is computed as $\hat{A}_i^{\text{entropic}} =
    \frac{\exp\!\left(\beta (R_i - R_{\max})\right)}
    {Z_{-i} + \epsilon_{\mathrm{num}}}
    - 1$,
where $Z_{-i} =
    \frac{1}{n-1}
    \sum_{j \neq i}
    \exp\!\left(\beta (R_j - R_{\max})\right)$.
In TTT-Discover, the entropic objective is paired with state reuse, making it well-suited to discovery settings where a single breakthrough branch matters more than average batch quality.
These methods show that evolutionary trajectories can provide useful test-time supervision for policy learning. In many research and engineering tasks, strong mutations require domain-specific reasoning about architectural design, optimization, and system trade-offs~\cite{zhang2024wukong, zhai2024actions, guo2017deepfm}. At the same time, evaluators are often too expensive for only small rollout groups to be feasible~\cite{gao2022kuairec}. Under this regime, the choice of reinforcement learning signal inside the search loop becomes a central design decision. In addition, both train the policy as an end-to-end actor, implicitly assuming that the same model can both identify promising search directions and implement them reliably.

\section{Method}

\subsection{Agent Workflow}
\label{sec:workflow}

Figure~\ref{fig:pacerl} summarizes the full workflow. The method assumes a population-based evolutionary search agent that exposes the current parent program, recent search history, evaluator scores, and a synchronization point at rollout boundaries. At each iteration, the advisor conditions on the parent program and search history to generate and select a hypothesis. A frontier implementation model converts this hypothesis into a concrete code edit, which the task-specific scorer then evaluates. The resulting outcomes are incorporated into the evolutionary population before the corresponding policy update is performed. After optimization on that rollout batch, the updated advisor parameters are synchronized to the rollout workers and used for the next iteration.

The workflow is organized around two design choices. \S~\ref{sec:advisor} describes the advisor decomposition, which learns the strategic reasoning policy while delegating code realization to a stronger implementation model. \S~\ref{sec:hybrid} describes the search-dynamics-aware objective, which changes the source of credit assignment as the search moves from exploration to frontier refinement. This design retains the advantages of strong context and search-state management while enabling test-time policy refinement through learned, task-specific search priors. Its decoupled structure also naturally admits off-policy training, requiring only changes to the synchronization barriers imposed by the top-level orchestrator.

\subsection{Advisor Model Training}
\label{sec:advisor}

The workflow above separates implementation from reasoning. In MLE tasks (\S~\ref{sec:tasks}), high-level search reasoning and low-level code implementation have different capacity requirements. Training an open-weight model end-to-end to produce full function-level mutations often fails because the model cannot reliably implement complex candidates, causing the reward to reflect implementation success as much as idea quality (Appendix~\ref{appen:tasks}).

We therefore apply reinforcement learning to an advisor model~\cite{asawa2025train} tasked with proposing new candidate ideas. The advisor learns the strategic parts of evolutionary search, including idea generation, novelty classification, and hypothesis selection, while a stronger frontier model translates the selected hypothesis into concrete code modifications~\cite{comanici2025gemini}. This separates \emph{what to try} from \emph{how to implement it}, aligning with the broader post-training practice of developing reasoning and coding as distinct capabilities before composing them in agentic systems.

The trained policy therefore serves as an adaptive reasoning layer over the evolving search landscape. Useful mutations depend not only on the current code state, but also on the current phase of the search: whether the frontier requires broader exploration, architectural consolidation, or fine-grained refinement under a fixed evaluation budget. This division enables the advisor to internalize not only static domain knowledge but also dynamic search priors: which mutation families tend to unlock new regions of the search space early, which ideas are worth revisiting after partial progress, and which refinements are likely to yield improvements over the current frontier.

\subsection{Search Dynamics Aware Policy Optimization}
\label{sec:hybrid}

Training the advisor model requires an RL objective that remains stable when candidate evaluation is costly, and the search frontier is non-stationary. The key design issue is not only the reward scale but also the geometry of credit assignment. Early in the search, candidates often differ in mechanism and quality, so centered-score differences provide useful, dense feedback. Late in search, candidates are often near-neighbor variants of an already strong parent, so the decisive event is whether a response changes the best-of-\(k\) frontier. Our objective is designed around this transition.

This transition is especially important in realistic optimization tasks. A single candidate may require GPU training, large-scale simulation, system benchmarking, or multi-dataset validation, leading to evaluation budgets measured in minutes and hours rather than seconds. Under this regime, rollout groups are necessarily smaller, and the RL objective must extract useful learning signals from far fewer candidates.

Prior self-evolving systems~\cite{shinkaevolve, shojaee2024llm} were primarily developed for settings with inexpensive evaluators, such as mathematical verification or kernel microbenchmarks, where each candidate can be scored in seconds, and hundreds of rollouts can be generated per optimization step. Recent work on efficient evolutionary search reduces the full search horizon to a few hundred iterations~\cite{shinkaevolve, assumpccao2025codeevolve, liu2026evox}, but existing RL methods for evolutionary agents still rely on much larger rollout batches, often generating 512 candidates per training step~\cite{yuksekgonul2026learning, wang2025thetaevolve}. Under this setup, each reinforcement learning step can cost more than an entire sample-efficient evolutionary run~\cite{cemri2026adaevolve, yan2026pacevolve}. We therefore investigate how to enable robust test-time reinforcement learning within evolutionary search while retaining its sample efficiency.

\paragraph{Search phases in long-horizon evolution.}

Long-horizon evolutionary search often exhibits log-like marginal reward increase as search progresses due to the increasing difficulty of discovering new state-of-the-art solutions~\cite{alphaevolve, yuksekgonul2026learning}. Early in training, the frontier is broad and diverse: sampled candidates differ substantially in their mechanisms, implementation strategies, and quality~\cite{liu2025fitness}. In this exploratory regime, dense token-level relative feedback is particularly valuable when candidate solutions differ significantly. 

Later, the search enters a refinement regime, where new state-of-the-art solutions become more difficult to discover. Candidates become local variants of already strong solutions, reward gains exhibit diminishing returns, and absolute score gaps compress toward the level of evaluator noise~\cite{cheng2025barbarians}. The optimization question changes from "which mutation class is broadly better?" to "which candidate meaningfully changes the frontier?" In this regime, entropic weighting over-concentrates on reward outliers, while GRPO amplifies small numerical differences into disproportionately large gradient magnitudes, often causing optimization instability. Recent lines of work have systematically analyzed GRPO's deficiency in small-batch, low-reward-variance regimes, such as high variance~\cite{zhou2026demystifying, han2026ebpo} and bias for high-likelihood solutions~\cite{plyusov2026f}. 

Figure~\ref{fig:main_results} illustrates these failure modes in practice. The auxiliary traces reveal unstable optimization behavior: entropy can collapse as the objective over-commits to exploitation, while gradient norms spike when compressed rewards are amplified into large updates. These dynamics motivate a training objective whose credit geometry changes with the search phase.

\begin{figure*}[t]
    \centering
    \begin{subfigure}[t]{0.32\textwidth}
        \centering
        \includegraphics[width=\linewidth]{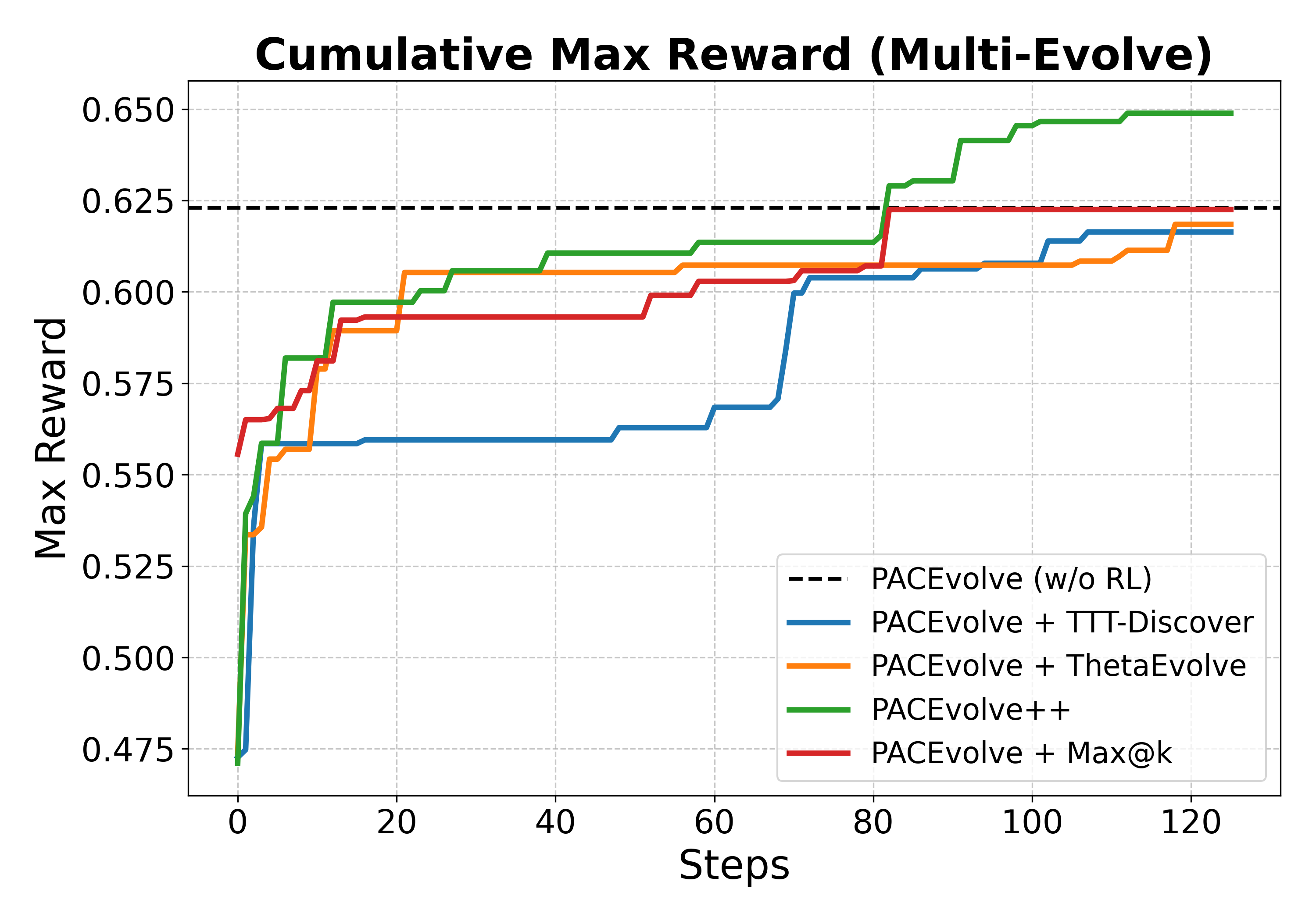}
        \caption{Multi-Evolve cumulative max reward.}
    \end{subfigure}
    \hfill
    \begin{subfigure}[t]{0.32\textwidth}
        \centering
        \includegraphics[width=\linewidth]{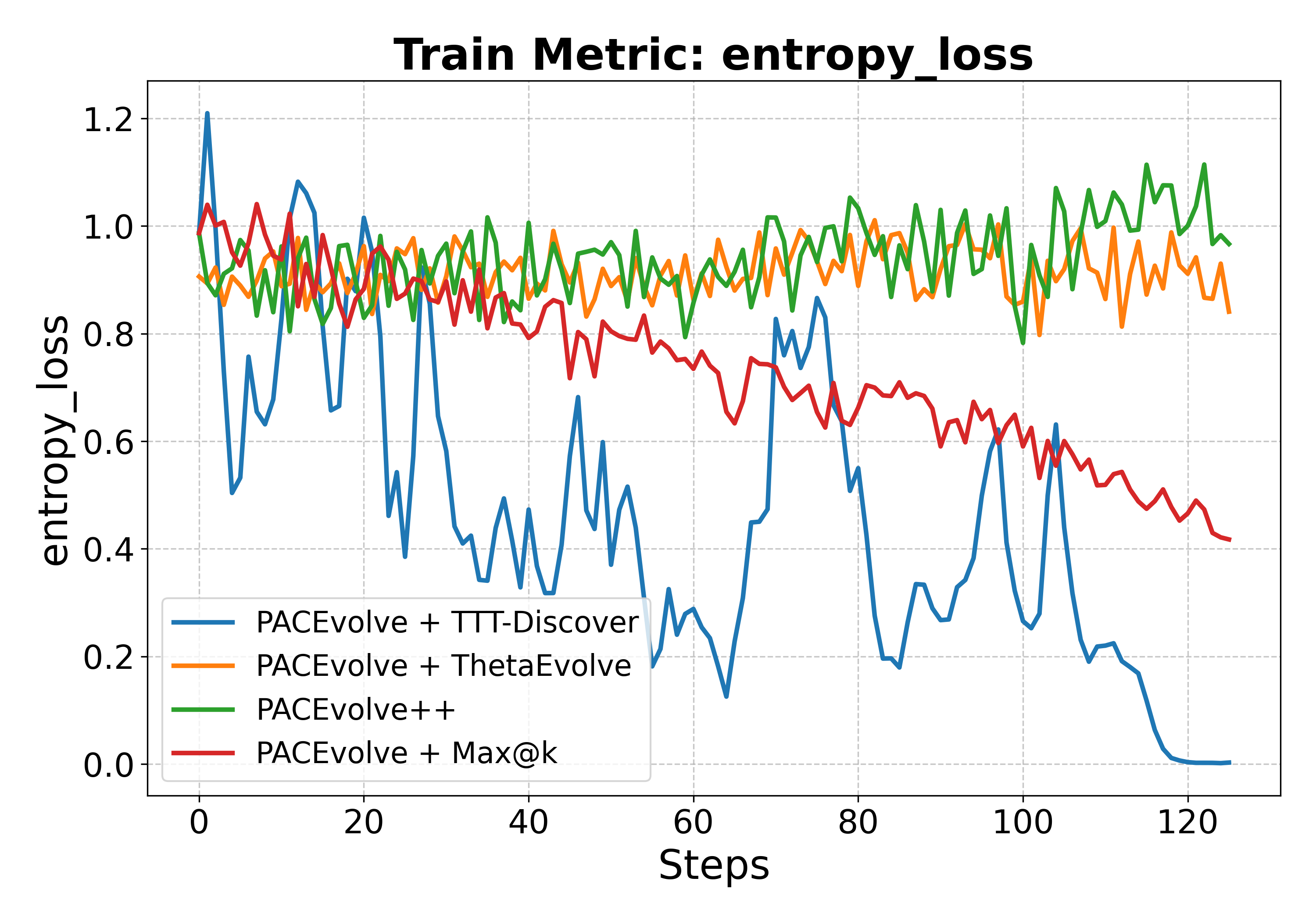}
        \caption{Multi-Evolve policy entropy.}
    \end{subfigure}
    \hfill
    \begin{subfigure}[t]{0.32\textwidth}
        \centering
        \includegraphics[width=\linewidth]{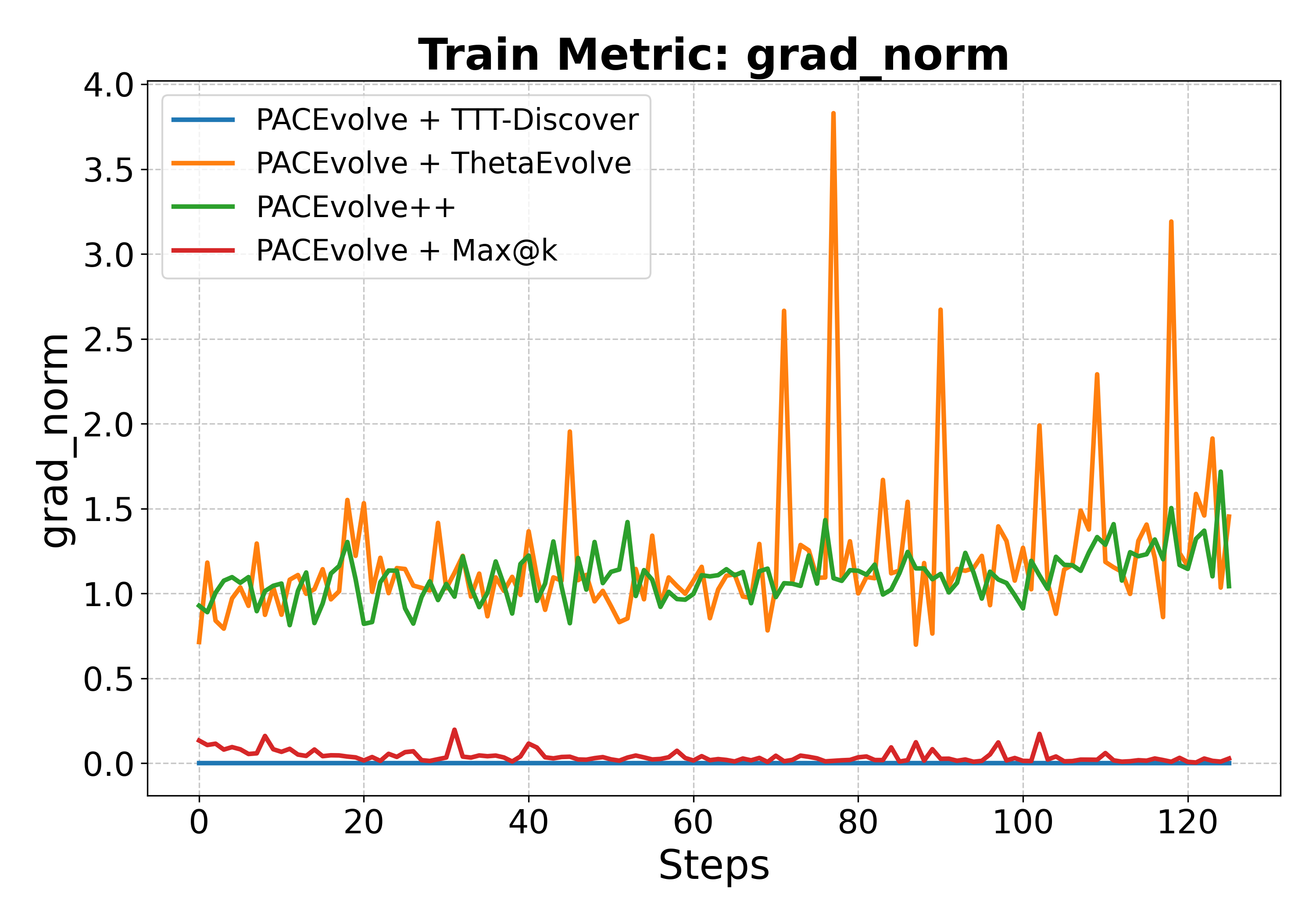}
        \caption{Multi-Evolve gradient norm.}
    \end{subfigure}
    \caption{Training dynamics for DeepSeek-R1-0528-Qwen3-8B on Multi-Evolve. \method{} reaches the best final reward while avoiding the instability patterns observed in baselines.}
    \label{fig:main_results}
    \vspace{-10pt}
\end{figure*}

\paragraph{Phase-aligned advantage design.}

To mitigate the above challenges, we design the training signal around the search dynamics themselves. In the exploratory regime, a raw group-relative baseline preserves dense within-group credit assignment without the late-stage variance blow-up:
$\hat{A}_i^{G} = R_i - \bar{R}$~\cite{liu2025understanding}.
To encourage exploration, we also adopt the asymmetric clipping introduced in DAPO, so that more rare but promising tokens can still receive meaningful positive updates~\cite{yu2025dapo}.

In the refinement regime, we use a pass@$k$-based marginal-contribution signal (PKPO)~\cite{walder2025pass}. Given $N$ sampled responses from search state \(x\) with rewards $g_1, \ldots, g_N$, PKPO constructs unbiased gradient weights $w_i$ such that
\begin{equation}
    \nabla_\theta
    \mathbb{E}\!\left[\max(g_1, \ldots, g_k)\right]
    =
    \mathbb{E}\!\left[
        \sum_{i=1}^N
        w_i
        \nabla_\theta
        \log \pi_\theta(a_i \mid x)
    \right].
\end{equation}
The corresponding PKPO weight can be written as a normalized sum of best-of-$k$ scores over all size-$k$ subsets that contain sample $i$:
\begin{equation}
    w_i =
    \frac{1}{\binom{N}{k}}
    \sum_{\substack{I \subseteq \{1, \ldots, N\} \\ |I| = k,\ i \in I}}
    \max_{j \in I} g_j.
\end{equation}
Equivalently, this is $\frac{k}{N}$ times the conditional average over size-$k$ subsets that contain $i$.
In practice, we use the low-variance SLOO$_{k-1}$ estimator, which turns this into an explicit marginal-contribution signal by subtracting the best alternative available when $i$ is removed:
\begin{equation}
    \hat{A}_i^{\mathrm{top-k}} = w_i^{\mathrm{SLOO}} =
    \frac{1}{\binom{N}{k}}
    \sum_{\substack{I \subseteq \{1, \ldots, N\} \\ |I| = k,\ i \in I}}
    \left(
        \max_{j \in I} g_j
        -
        \max_{b \in I \setminus \{i\}} g_b
    \right).
\end{equation}

\begin{theorem}[Scale-conditioned credit assignment under reward compression]
\label{thm:reward-collapse}
Let \(g_i^{(\delta)}=c+\delta r_i\) be a reward batch with fixed ranking, compression scale \(\delta>0\), base mean \(\bar r\), and base standard deviation \(\sigma_r\). Let \(\Phi_{\epsilon_{\mathrm{num}}}(B)_i=(B_i-\mu(B))/(\sigma(B)+\epsilon_{\mathrm{num}})\). For the raw group-relative branch \(A_i^G(\delta)=g_i^{(\delta)}-\bar g^{(\delta)}\) and the SLOO branch \(w_i^{\mathrm{SLOO}}(\delta)\), the standardized branches satisfy
\[
\Phi_{\epsilon_{\mathrm{num}}}(A^G(\delta))_i
=
\frac{\delta(r_i-\bar r)}{\delta\sigma_r+\epsilon_{\mathrm{num}}},
\qquad
\Phi_{\epsilon_{\mathrm{num}}}(w^{\mathrm{SLOO}}(\delta))_i
=
\frac{\delta\left(w_i^{\mathrm{SLOO}}(1)-\mu(w^{\mathrm{SLOO}}(1))\right)}
{\delta\sigma(w^{\mathrm{SLOO}}(1))+\epsilon_{\mathrm{num}}}.
\]
Both standardized branch vectors have squared \(L_2\) norm at most \(N\).
\end{theorem}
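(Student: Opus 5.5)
The proof is a direct computation in three steps. First, show how each branch transforms under the affine compression $g^{(\delta)}=c+\delta r$. Second, push that transformation through the standardization $\Phi_{\epsilon_{\mathrm{num}}}$, using that $\mu$ and $\sigma$ are equivariant under scaling and translation. Third, bound the norm.

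For the group-relative branch, $\bar g^{(\delta)}=c+\delta\bar r$, so $A_i^G(\delta)=\delta(r_i-\bar r)$. This vector has mean zero and population standard deviation $\delta\sigma_r$, and dividing gives the first displayed identity.

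For the SLOO branch, I would show that $w^{\mathrm{SLOO}}_i(\delta)=\delta\,w^{\mathrm{SLOO}}_i(1)$, where $w(1)$ denotes the weights computed from the base rewards $r$. Inside each summand, the maximum over $I$ and the maximum over $I\setminus\{i\}$ are both taken over an affinely transformed batch with $\delta>0$. Hence $\max_{j\in I}(c+\delta r_j)=c+\delta\max_{j\in I}r_j$, and the same holds on $I\setminus\{i\}$. The constant $c$ cancels in the difference, leaving $\delta$ times the base summand. The normalizing factor $1/\binom{N}{k}$ does not depend on the rewards, so the whole weight scales by $\delta$. Here I note that the ranking is preserved, and in particular the argmax structure on every subset, which is automatic for $\delta>0$.

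One subtlety: when $k=1$, the set $I\setminus\{i\}$ is empty. I would adopt the PKPO convention for that case, or restrict to $k\ge2$, and note that with any convention the baseline term still scales as $c+\delta\cdot(\cdot)$. Given the scaling, $\mu(w(\delta))=\delta\mu(w(1))$ and $\sigma(w(\delta))=\delta\sigma(w(1))$, which gives the second identity.

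For the norm bound, take any vector $B$ and let $v=B-\mu(B)$. Then $\|v\|_2^2=N\sigma(B)^2$ under the population standard deviation convention used for GRPO in the paper. Therefore
\[
\|\Phi_{\epsilon_{\mathrm{num}}}(B)\|_2^2=\frac{N\sigma(B)^2}{(\sigma(B)+\epsilon_{\mathrm{num}})^2}\le N,
\]
since $\epsilon_{\mathrm{num}}\ge0$. If $\sigma(B)=0$, the vector is zero, and the degenerate case $\epsilon_{\mathrm{num}}=0$ with $\sigma=0$ is excluded by assuming $\epsilon_{\mathrm{num}}>0$. Applying this to both branches finishes the proof.

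The main obstacle is the SLOO homogeneity step, specifically stating conventions cleanly: the empty-max case $k=1$, and confirming that the max of an affinely transformed set commutes with the transformation. Everything else is routine algebra.
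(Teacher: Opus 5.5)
Your proposal is correct and follows the paper's proof essentially step for step. The shared steps are: centering and scaling for the group-relative branch; translation equivariance plus positive homogeneity of $\max$, which give $w^{\mathrm{SLOO}}(\delta)=\delta\,w^{\mathrm{SLOO}}(1)$ before standardization; and the bound $N\sigma(B)^2/(\sigma(B)+\epsilon_{\mathrm{num}})^2\le N$. The paper avoids your $k=1$ edge case by assuming $2\le k\le N$, and its extra discussion of which responses receive nonzero SLOO credit goes beyond what the stated theorem requires.
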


Theorem~\ref{thm:reward-collapse} formalizes the scale-conditioned view used by our objective (proof in Appendix~\ref{appen:training_stability}). When the corresponding branch standard deviation dominates \(\epsilon_{\mathrm{num}}\), standardization removes the global affine reward scale and preserves the branch-specific credit ordering. In early search, reward variance is large enough that standardized group-relative feedback is a well-conditioned centered score-difference signal. As the search progresses and candidates become increasingly similar, the more important distinction is the geometry of credit assignment: SLOO\(_{k-1}\) assigns credit according to whether a response changes a best-of-\(k\) frontier. This frontier-contribution geometry is invariant to affine reward rescaling, aligning with late-stage refinement, where absolute gaps are small, but the identity of frontier-changing candidates remains informative.

\paragraph{Phase-adaptive advantage computation.}

In our training setup, each rollout iteration contains a group of candidates sampled from their respective evolutionary search process. Let $\mathcal{G}_t$ denote this rollout group at iteration $t$. The raw group-relative and SLOO signals can have different numerical ranges, and PPO-style clipped objectives are sensitive to arbitrary advantage scale. We therefore standardize each scalar estimator within the current rollout group before mixing: $\tilde{A}_i^{(\cdot)} =
    \frac{\hat{A}_i^{(\cdot)} - \mu_{\mathcal{G}_t}(\hat{A}^{(\cdot)})}
    {\sigma_{\mathcal{G}_t}(\hat{A}^{(\cdot)}) + \epsilon_{\mathrm{num}}},
    \  i \in \mathcal{G}_t$.
This step makes the two branches numerically comparable before clipping. The standardized group-relative branch remains a dense z-score over rollout rewards, while the standardized PKPO branch is an affine transform of a frontier-contribution score. Thus, the phase-adaptive mixture changes the source and semantics of credit assignment rather than merely changing the update scale. If the corresponding standard deviation is non-finite or below the numerical threshold $\epsilon_{\mathrm{skip}}$, suggesting that the branch has collapsed to numerical noise, we skip this gradient update rather than normalizing an uninformative signal.
We then form a mixed scalar score
\begin{equation}
\label{eq:hybrid}
    A_i^{\text{mix}}(t)
    =
    (1-\alpha_t)\tilde{A}_i^{G}
    +
    \alpha_t \tilde{A}_i^{\text{top-k}},
\end{equation}

The phase schedule, therefore, changes which signal dominates rather than inadvertently changing the overall update magnitude. We scale $\alpha_t$ linearly from 0 to 1 throughout the course of training.

\paragraph{Progress-normalized reward shaping.}

The RL objective uses a task-specific score as the raw objective, as in any evolutionary search framework~\cite{alphaevolve, openevolve, shinkaevolve}. Let \(y\) denote a successfully parsed finite score for a given task. Each task specifies an optimization direction together with lower and upper normalization bounds \(y_{\min}\) and \(y_{\max}\). When explicit score-transform bounds are not provided, these are set from the task configuration as
$y_{\min} = \min(y_{\mathrm{init}}, y_{\mathrm{target}})$,
$y_{\max} = \max(y_{\mathrm{init}}, y_{\mathrm{target}})$.
We then compute a direction-aware normalized progress variable
$u(y) = \mathrm{clamp}\!\left(\dfrac{y - y_{\min}}{y_{\max} - y_{\min}}, 0, 1 \right)$ if the metric is maximized, and change the numerator to $y_{\max} - y$ otherwise. We then define the RL reward as $R_{\mathrm{RL}}(y) = c\, u(y)^{\alpha_r}$,
where \(c>0\) is a positive multiplier and \(\alpha_r>0\) is a shaping exponent. In practice, we reduce this to a linearly scaled progress reward on \([0,5]\). Scores outside the configured range are clamped before normalization.
If evaluation fails or the result cannot be parsed, we assign \(-1.0\) as the reward. Parsed but non-finite scores are also mapped to \(-1.0\). Equivalently,
\begin{equation}
    R =
    \begin{cases}
        R_{\mathrm{RL}}(y), & \text{if } y \text{ is successfully parsed and finite}, \\
        -1.0, & \text{otherwise}.
    \end{cases}
\end{equation}

This transformation places heterogeneous task metrics, including both maximization and minimization objectives, into a shared progress-based reward scale for RL training.

\paragraph{Loss function.}

Our estimator produces a scalar advantage per sampled response. Concretely, Eq.~\ref{eq:hybrid} defines a response-level score $A_i^{\mathrm{mix}}(t)$ for response $i$, which is then broadcast to all response tokens: \(A_{i,\tau}^{\mathrm{tok}} = A_i^{\mathrm{mix}}(t)\) for all \((i,\tau) \in \mathcal{T}_t\).
We then optimize a masked clipped surrogate objective over valid response tokens~\cite{schulman2017proximal}:
\begin{equation}
    \mathcal{L}(\theta)
    =
    -\mathbb{E}_{(i,\tau) \sim \mathcal{T}_t}
    \left[
        \min\!\left\{
        r_{i,\tau}(\theta)A_{i,\tau}^{\mathrm{tok}},
        \operatorname{clip}\!\left(r_{i,\tau}(\theta),1-\epsilon_{\mathrm{lo}},1+\epsilon_{\mathrm{hi}}\right)A_{i,\tau}^{\mathrm{tok}}
        \right\}
    \right].
\end{equation}

Here $\mathcal{T}_t = \{(i,\tau) : i \in \mathcal{G}_t,\; m_{i,\tau}=1\}$ denotes the valid response tokens in rollout group $\mathcal{G}_t$, $m_{i,\tau}$ is the response-token loss mask, and $r_{i,\tau}(\theta)$ is the token-level importance ratio.

\section{Experiments} \label{sec:exp}

\subsection{Task Selection} \label{sec:tasks}
We evaluate \method's performance on a variety of real-world machine-learning-related engineering and research tasks, spanning algorithm design for model routing~\cite{liu2024deepseek}, improvements over the state-of-the-art recommender models~\cite{gao2022kuairec, ye2026fuxi}, and model design for protein engineering~\cite{tran2026rapid}. These tasks are grounded in real-world challenges and require innovative solutions for further improvements. Shared training settings and task-specific evaluator timeouts are reported in Appendix~\ref{appen:config}.

\subsubsection{Expert-parallelism Load Balancing}

\paragraph{Problem.} Mixture-of-experts (MoE) models route computation through specialized expert subnetworks, but balancing load across devices during parallel inference remains challenging~\cite{shazeer2017outrageously}. The EPLB task asks for an algorithm that, given a workload profile of per-expert demand, assigns experts to devices to minimize the maximum per-device load while remaining computationally efficient.

\paragraph{Evaluation.} Candidates are tested on expert-load profiles derived from production MoE traces. We report two metrics: \emph{balancedness}, which measures the uniformity of device load, and \emph{speed}, defined as the inverse of the algorithm's wall-clock time. The final score is their arithmetic mean, as in~\cite{cheng2025barbarians}.

\paragraph{Evolution surface.} The evolvable block implements only the assignment logic: its input is an expert-load tensor and its output is a device-assignment map. The evaluation harness, data loading, and metrics are fixed.

\subsubsection{Sequential Recommendation}

\paragraph{Problem.} Sequential recommendation aims to predict a user's next interaction from their history. Our KuaiRec task uses a FuXi-linear-style sequential recommender~\cite{ye2026fuxi, gao2022kuairec}. Concretely, the benchmark evolves a fixed-budget next-item ranking model on KuaiRec, a fully observed user-item interaction dataset from the Kuaishou short-video platform, with long user histories and time-aware sequence modeling.

\paragraph{Evaluation.} Each candidate model is trained for 16 epochs with sampled softmax and evaluated by full-catalog ranking. We report NDCG@10, Hit Rate@10, and MRR, and optimize their arithmetic mean. Each candidate is subject to a 1{,}200-second wall-clock budget; exceeding this budget results in evaluation failure.

\paragraph{Evolution surface.} The evolvable block covers sequence feature construction and the FuXi-linear-style encoder/scoring logic. In particular, candidates can modify how raw histories are converted into item, timestamp, and positional features, as well as the multi-channel sequence mixer, pooling strategy, and item-scoring module. The data pipeline, training loop, sampled-softmax objective, and evaluation protocol are fixed.

\subsubsection{Protein Fitness Extrapolation}

\paragraph{Problem.} Predicting the fitness effect of multiple simultaneous protein mutations from single- and double-mutant training data is challenging~\cite{romero2009exploring}. The Multi-Evolve benchmark measures extrapolation: models are trained on wild-type, single, and double mutants, and must then predict fitness for mutants of order three or higher~\cite{tran2026rapid}.

\paragraph{Evaluation.} For each dataset, we report Pearson correlation ($r$) and Precision@5, defined as the fraction of top-5 predictions that are truly top-5. We define the combined score as $0.7 \times \overline{r} + 0.3 \times \overline{\text{P@5}}$, averaged across datasets.

\paragraph{Evolution surface.} The evolvable block covers mutation featurization, pairwise epistatic interactions, regularization and calibration, sample weighting across mutation orders, and lightweight ensembling.

\subsection{Baselines}

We evaluate all RL variants in the same long-horizon evolutionary search harness. We compare against methods that integrate RL into evolutionary search (ThetaEvolve~\cite{wang2025thetaevolve}, TTT-Discover~\cite{yuksekgonul2026learning}, and Max@$k$ training~\cite{walder2025pass}) by varying the training setups during advisor training. We also compare against a no-RL scaffold baseline to isolate the effect of test-time advisor training. This setup preserves a strong adaptive workflow while directly comparing approaches for efficient test-time training during evolution, providing a fair testbed for various reinforcement learning recipes. We evaluate on Qwen3.5-4B and DeepSeek-R1-0528-Qwen3-8B to demonstrate our method across different model sizes using Gemini-3.1-pro-preview for candidate implementation. More details on baselines and experiment setups are discussed in Appendix~\ref{appen:config}.

\subsection{Results}

Figures~\ref{fig:main_results_8b} and~\ref{fig:main_results_4b} compare the main training trajectories. Across EPLB, Sequential Recommendation, and Protein Fitness Prediction, \method{} consistently provides the strongest final reward while optimizing smoothly and converging the fastest. We note that on EPLB, both \method{} and the non-RL PACEvolve baseline reach a saturated near-optimal solution~\cite{cheng2025barbarians}, but \method{} uses only half of the evolution budget. On Sequential Recommendation and Protein Fitness Extrapolation, our method converges to a better solution than the baselines. The entropy trace further indicates that \method{} exhibits fewer spikes and instabilities than baseline methods. In Appendix~\ref{appen:disaggre}, we provide disaggregated metrics for each metric we aim to optimize jointly.

\begin{figure*}[t]
    \centering
    \begin{subfigure}[t]{0.32\textwidth}
        \centering
        \includegraphics[width=\linewidth]{figures/plot_max_score_evolution_8b_Multi-Evolve.png}
        \caption{Multi-Evolve max reward.}
    \end{subfigure}
    \hfill
    \begin{subfigure}[t]{0.32\textwidth}
        \centering
        \includegraphics[width=\linewidth]{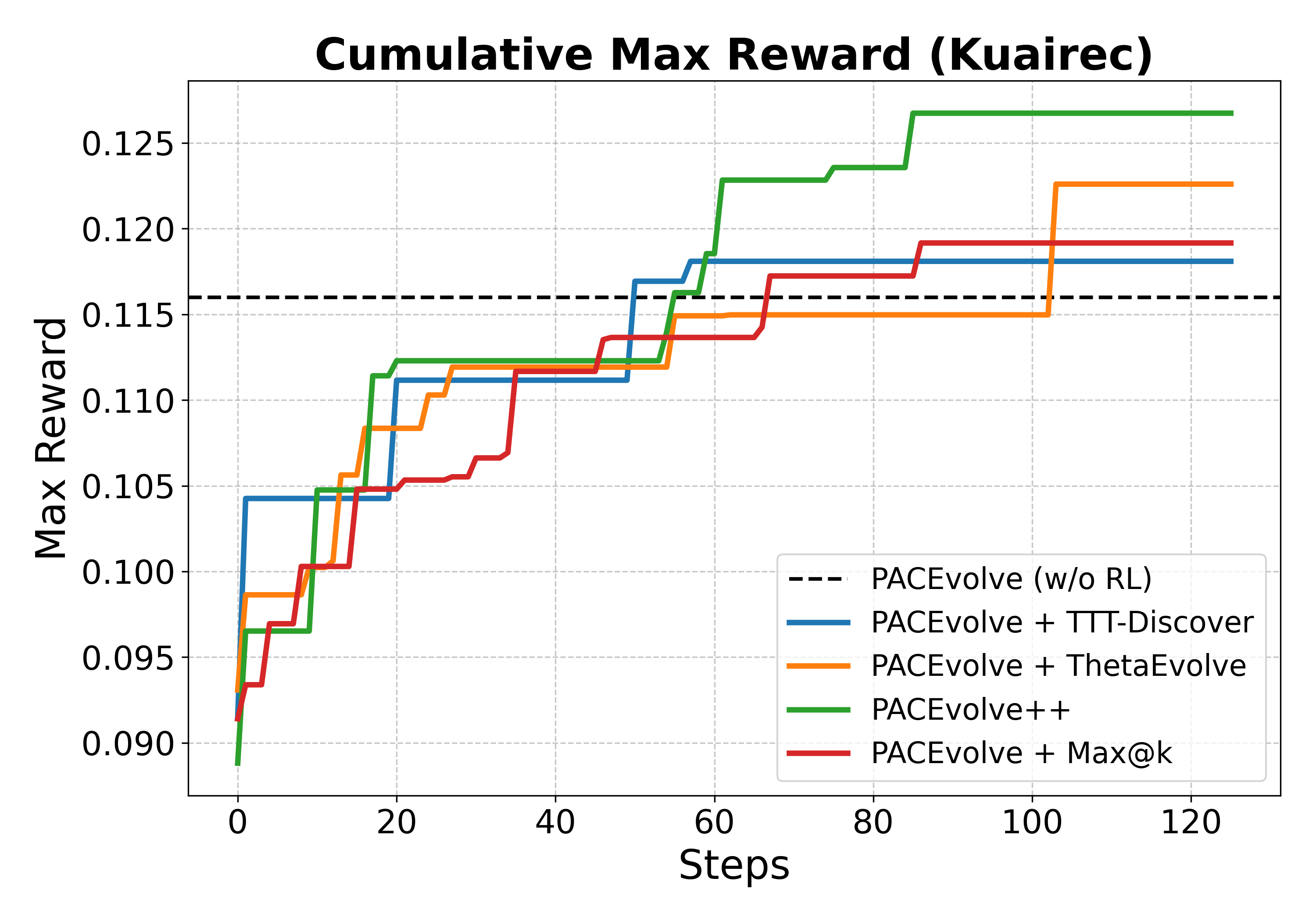}
        \caption{KuaiRec max reward.}
    \end{subfigure}
    \hfill
    \begin{subfigure}[t]{0.32\textwidth}
        \centering
        \includegraphics[width=\linewidth]{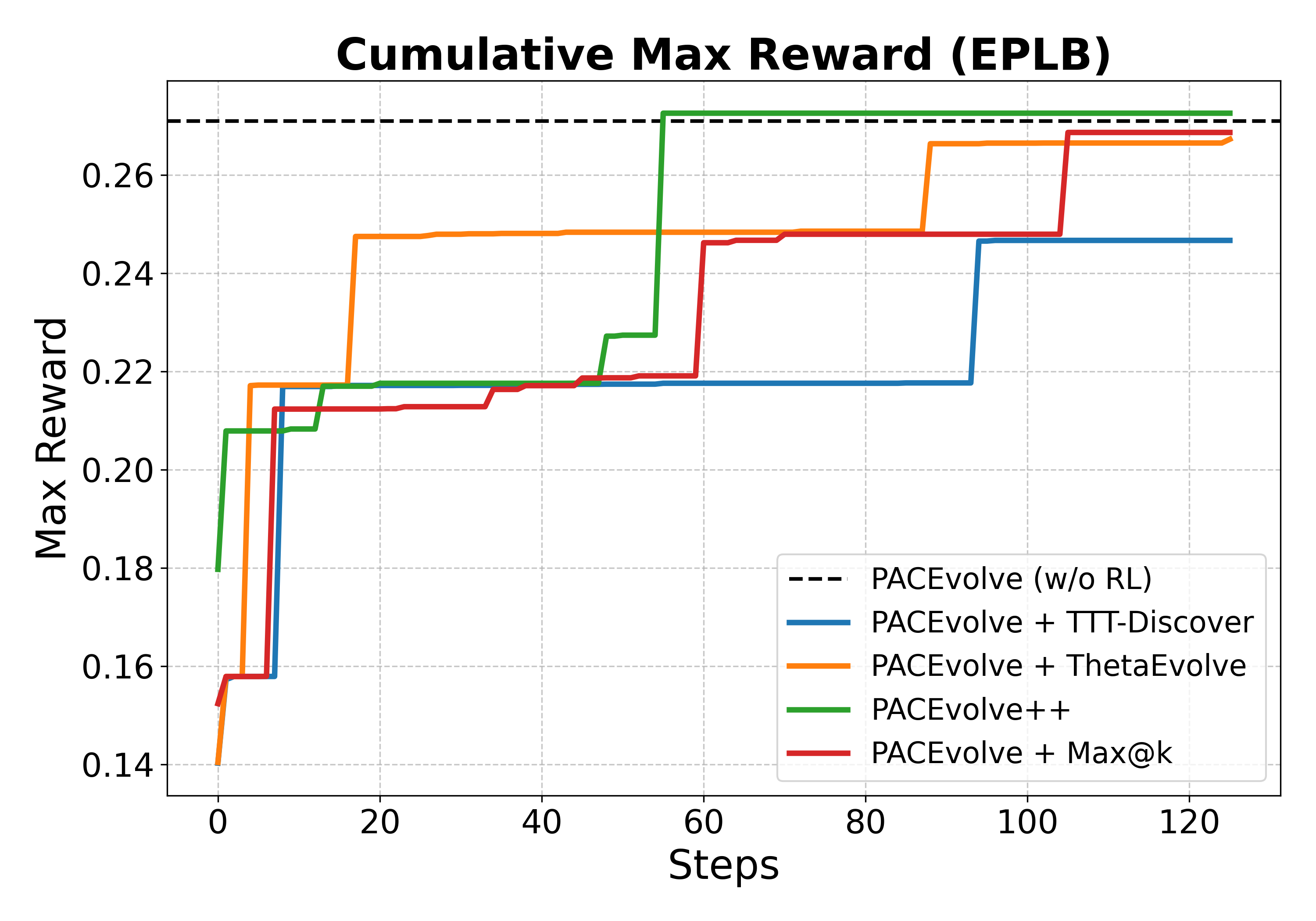}
        \caption{EPLB max reward.}
    \end{subfigure}
    \caption{Comparison of different RL algorithms on DeepSeek-R1-0528-Qwen3-8B across three tasks. \method{} reaches the best final reward and converges the fastest.}
    \label{fig:main_results_8b}
    \vspace{-10pt}
\end{figure*}

\begin{figure*}[t]
    \centering
    \begin{subfigure}[t]{0.32\textwidth}
        \centering
        \includegraphics[width=\linewidth]{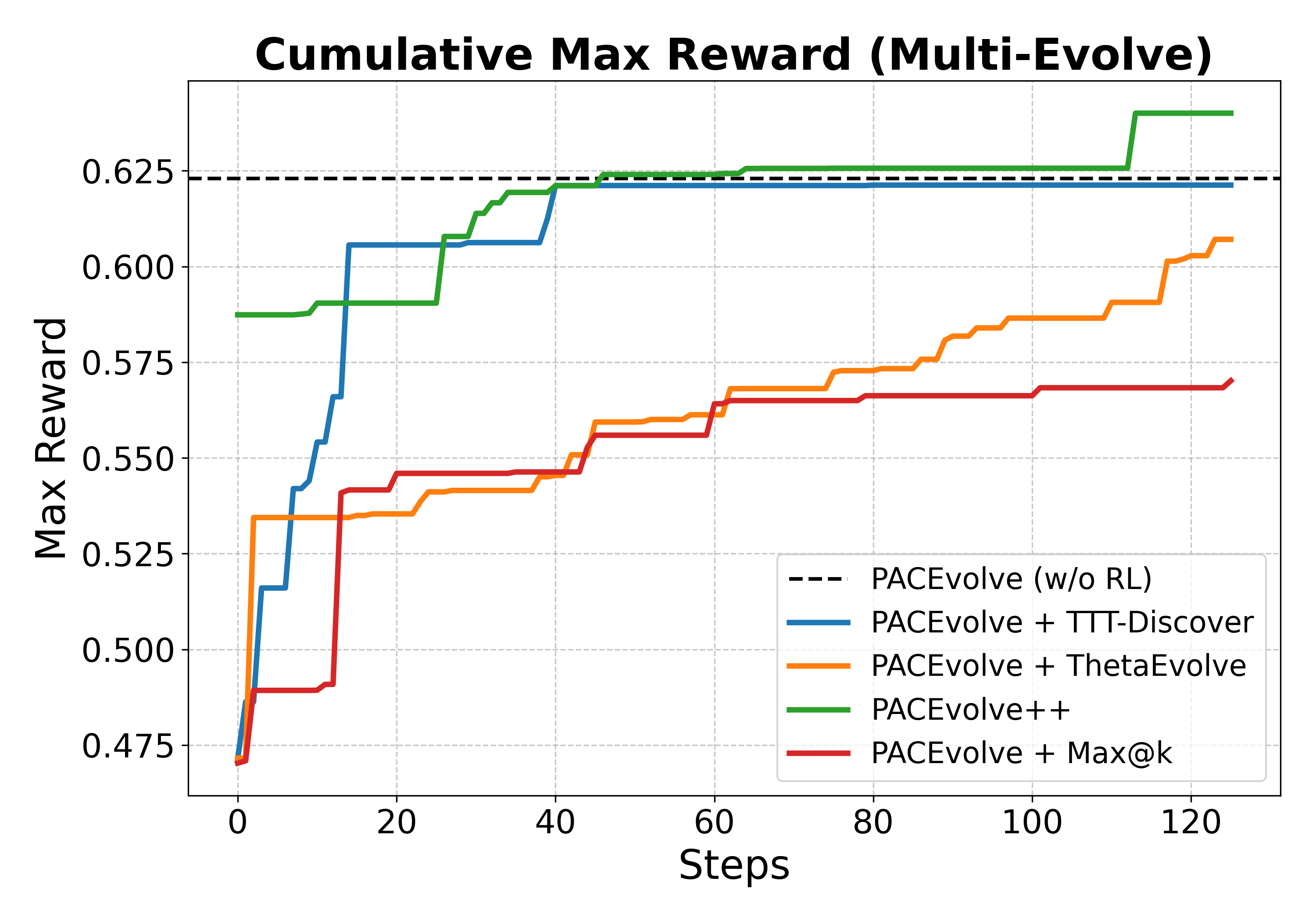}
        \caption{Multi-Evolve max reward.}
    \end{subfigure}
    \hfill
    \begin{subfigure}[t]{0.32\textwidth}
        \centering
        \includegraphics[width=\linewidth]{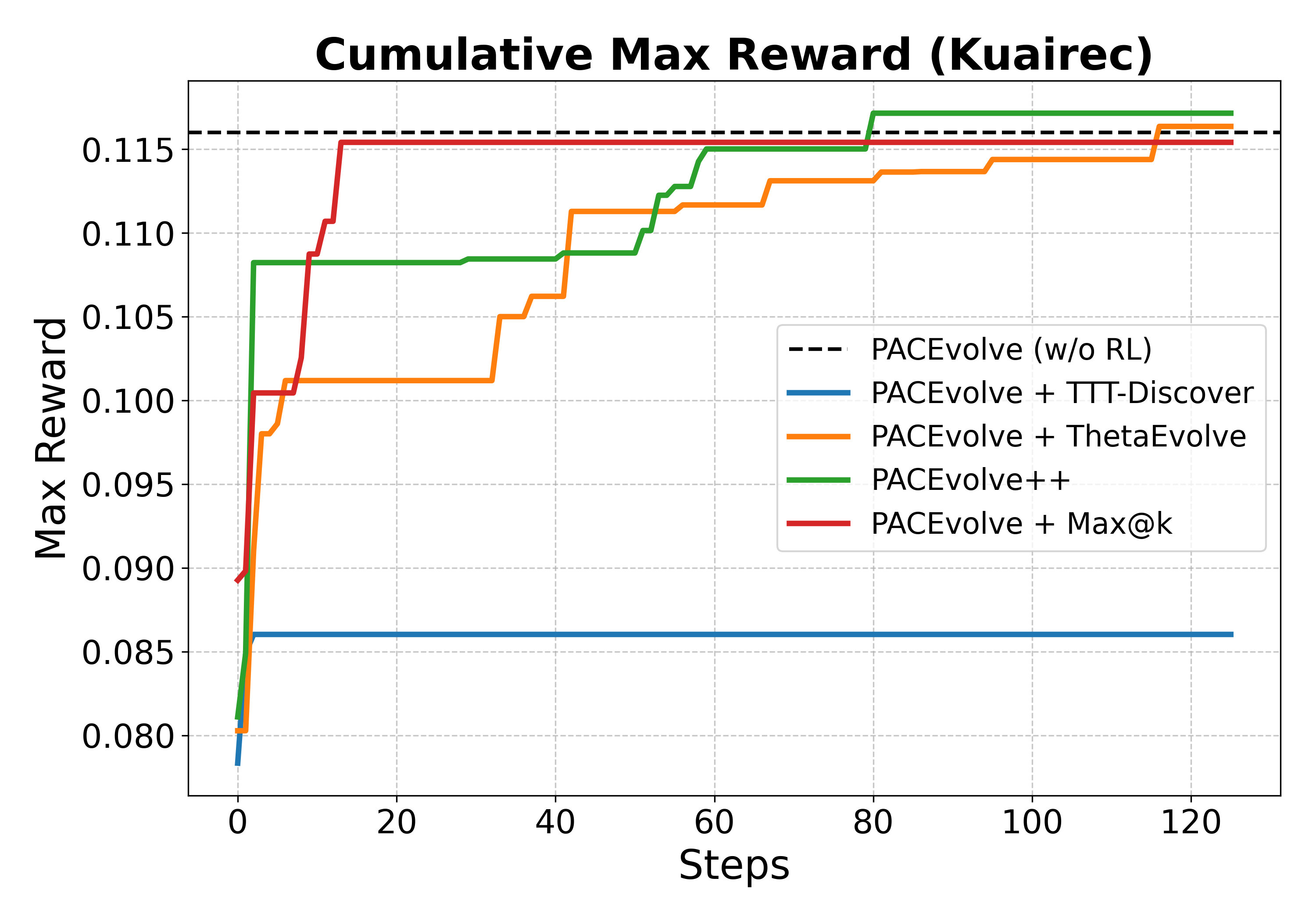}
        \caption{KuaiRec max reward.}
    \end{subfigure}
    \hfill
    \begin{subfigure}[t]{0.32\textwidth}
        \centering
        \includegraphics[width=\linewidth]{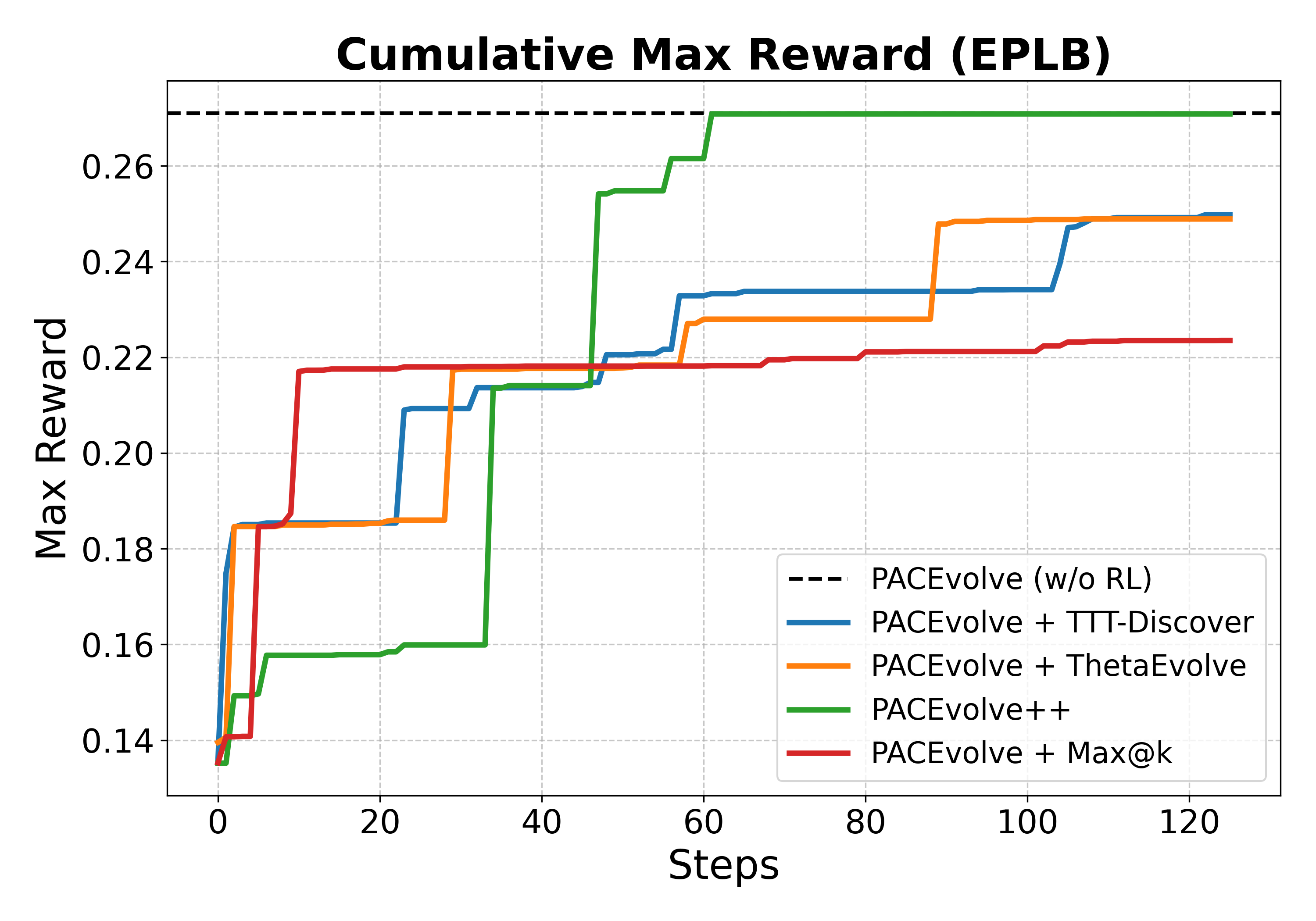}
        \caption{EPLB max reward.}
    \end{subfigure}
     \caption{Comparison of different RL algorithms on Qwen3.5-4B across three tasks. \method{} demonstrates faster convergence to better results.}
    \label{fig:main_results_4b}
    \vspace{-10pt}
\end{figure*}

\paragraph{Analysis.}
The auxiliary metrics clarify why the baselines stall. ThetaEvolve's GRPO-style objective remains competitive in raw reward for much of EPLB. Still, Appendix~\ref{appen:diag} shows repeated late-stage gradient-norm spikes, including several excursions above $2$ and a peak above $4$, consistent with variance blow-up once grouped rewards compress. PKPO, shown as Max@$k$ in the figures, exhibits the opposite pathology: its entropy decreases almost monotonically from roughly $1.0$ to below $0.4$, indicating that it commits to exploitation too early and loses diversity before the search frontier is saturated. The entropic objective is the least stable overall due to concentrated reward distribution. 

Our search-dynamic-aware objective avoids these pathologies by matching the training signal to the search phase. Early in training, the grouped-relative branch maintains high entropy to sustain exploration, unlike Max@$k$, which collapses exploration too quickly. Later, the frontier-contribution branch improves refinement without inheriting GRPO's gradient spikes. This behavior is visible in the appendix diagnostics: \method{} keeps gradient norms in a comparatively narrow band around $1$ while maintaining materially higher entropy than Max@$k$, and these smoother dynamics translate into the strongest final search performance.

\section{Related work}

\paragraph{Evolutionary search agents.}
Evolutionary search with language models has developed along two closely related threads. FunSearch~\cite{funsearch} and AlphaEvolve~\cite{alphaevolve} showed that repeated propose-evaluate-select loops can turn LLMs into effective algorithmic search operators. At the same time, open-weight successors such as OpenEvolve~\cite{openevolve} broadened the set of accessible domains. PACEvolve~\cite{yan2026pacevolve} shifted attention from single-step mutation quality to long-horizon search organization, emphasizing context compression, backtracking, and collaborative exploration. Our work is complementary: we retain a strong scaffold but focus on developing a stronger reasoning policy within it.

\paragraph{Test-time training}
Test-time training aims to modify the model during inference for better performance~\cite{sun2020test, sun2024learning}. Prior work has explored test-time training in a variety of setups~\cite{zuo2025ttrl, tandon2025end, gandelsman2022test}. More recently, methods have been developed to integrate test-time training into evolutionary search agents, as the evolutionary search process can naturally generate on-policy data for reinforcement learning. ThetaEvolve~\cite{wang2025thetaevolve} trains the mutation policy against the evolving program database, highlighting the importance of learning within the non-stationary search process rather than relying solely on static prompts. TTT-Discover~\cite{yuksekgonul2026learning} combines an entropic objective with search-time state reuse and PUCT-style traversal~\cite{alphaevolve}, emphasizing discovery settings in which rare breakthroughs matter more than average batch quality.

\paragraph{Policy optimization for LLMs.}
PPO~\cite{schulman2017proximal} and its variants underpin RLHF and related post-training methods. GRPO~\cite{shao2024deepseekmath} replaces the learned value function with grouped baselines; DAPO~\cite{yu2025dapo} adds asymmetric clipping to encourage exploration of low probability tokens. Dr.~GRPO~\cite{liu2025understanding} removes both standard deviation and length normalization biases from GRPO. Pass@k training~\cite{chen2025pass} develops an entropy-guided approach to optimize pass@k for verifiable tasks. PKPO~\cite{walder2025pass} also targets the pass@$k$ objective and derives unbiased, low-variance gradient estimators via combinatorial weighting. 

\paragraph{Advisor models and small-model steering.}
Advisor-model approaches train compact open-weight models to generate instance-specific guidance for stronger frozen models~\cite{asawa2025train}. Our formulation adopts the same high-level separation of concerns: the trained advisor is responsible for strategic reasoning, whereas the larger frontier model is responsible for faithful implementation. In the self-evolving setting, this design allows task-specific search priors to be learned in the smaller model while preserving the coding strength of the larger implementation model.

\section{Conclusion}
We introduced \method{}, an advisor-style reinforcement learning framework for self-evolving agents that learns task-specific search priors under expensive evaluation regimes. By decoupling high-level reasoning from implementation and aligning the optimization objective with search dynamics, our approach stabilizes training in practical machine learning research and engineering settings where existing methods struggle. Empirically, \method{} achieves stronger and more stable search performance across diverse machine learning engineering tasks. These results highlight the importance of improving the reasoning policy, rather than just the search scaffold, to scale self-evolving agents to realistic domains.

\bibliography{pacerl}
\bibliographystyle{plain}

\newpage
\appendix

\section{Limitations}
Due to the high costs of both RL training and evolutionary search and limited resources, exacerbated by the fact that evaluating each evolutionary candidate involves training a model, we could not repeat the experiments or run them over a longer horizon. We leave it to future work to further scale up our experiments or train models with stronger coding capabilities that may handle evolution end-to-end.

\section{Training configuration}
\label{appen:config}

We show the default training configurations used in the experiments in Table~\ref{tab:shared_hparams}. We use the prompt templates (Appendix~\ref{appen:prompts}) and evolution setups from~\cite{yan2026pacevolve} for candidate generation, selection, and code implementation. We set the temperature to 1 for all prompts. Our experiments are performed in an online on-policy setting; each of the n evolutionary search threads generates a candidate, which is then used for training in the same step. The experiments are performed on A2 instances on GCP. 

\subsection{Task complexity} \label{appen:complexity}

We note that the tasks we selected are more complex to implement than those in existing work~\cite{wang2025thetaevolve}. Therefore, we found that tasking small open-weight models (with 4B to 8B parameters) with end-to-end evolution results in a low success rate in implementation correctness, thereby biasing the reward toward ideas that are valid when implemented correctly. This also makes it infeasible for an end-to-end ThetaEvolve-style RL training. However, we note that this is mainly a model capacity concern when tasking a compact open-weight model with complex coding tasks. This motivated our design to separate idea generation from code implementation. General coding capability for small, open-weight models is an important concern but out of scope for our study; we leave it to future work to improve few-shot capabilities in implementing a research prototype for complex MLE tasks. 

\begin{table}[h]
\caption{Hyperparameter setups}
\label{tab:shared_hparams}
\centering
\begin{tabular}{lc}
\toprule
Setting & Value \\
\midrule
Evolution iterations & 1000 \\
Samples per prompt ($n$) & 8 \\
Top $k$ & 4 \\
Optimizer & AdamW \\
Learning rate & 1e-6 \\
Weight decay & 0.1 \\
Adam $\beta_1$, $\beta_2$ & 0.9, 0.98 \\
Gradient steps per rollout & 1 \\
Clip $\epsilon$ / $\epsilon_h$ & 0.2 / 0.28 \\
EPLB timeout & 600s \\
KuaiRec timeout & 1{,}200s \\
Multi-Evolve timeout & 1{,}200s \\
\bottomrule
\end{tabular}
\end{table}

\section{Task details} \label{appen:tasks}

\subsection{EPLB}

The EPLB task is drawn from DeepSeek-V3's infrastructure for MoE model serving~\cite{liu2024deepseek}. Given a workload tensor of per-expert activation counts across a batch, the algorithm assigns experts to parallel devices so that (i) the maximum per-device workload is minimized and (ii) the assignment procedure remains fast. The evolvable code block takes the workload tensor as input and returns a device-assignment map. Workload profiles are derived from the public expert-load dataset introduced in~\cite{cheng2025barbarians}. 

\subsection{KuaiRec}

KuaiRec is a fully observed user-item interaction dataset from Kuaishou's short-video platform, containing roughly 7{,}176 users, 10{,}728 items, and 12.5 million interactions~\cite{gao2022kuairec}. The current benchmark instantiates a FuXi-linear-style sequential recommender with a maximum history length of 1{,}024, an embedding width of 128, four sequence-mixing blocks, and separate retention, temporal, and positional channels. The evolvable surface covers the sequence encoder and scoring logic: candidates can redesign item-, timestamp-, and position-aware token features, the multi-channel sequence mixer, the sequence summarization mechanism, and the item-scoring module. The surrounding scaffold remains fixed, including 16 epochs of sampled-softmax training, full-catalog evaluation, and the relaxed 1{,}200-second evaluator budget.

\subsection{Multi-Evolve}

Multi-Evolve evaluates combinatorial protein fitness prediction~\cite{tran2026rapid}. Proteins are mutated at multiple sites simultaneously, and the goal is to predict the joint fitness effect when training data contains only lower-order mutants. Under the same settings as \citep{tran2026rapid}, models are trained on wild-type, single, and double mutants, and then predict fitness for mutants with three or more substitutions. The evolvable block covers mutation featurization, pairwise epistatic interaction terms, regularization, sample weighting, and lightweight ensembling. Evaluation is performed across multiple protein datasets using Pearson correlation and Precision@5.

\section{Additional Results}

\subsection{Training Diagnostics} \label{appen:diag}
We show more training diagnostics metrics below (Figures~\ref{fig:diag_multi_8b}, ~\ref{fig:diag_multi_4b}, ~\ref{fig:diag_kuai_8b}, ~\ref{fig:diag_kuai_4b}, ~\ref{fig:diag_eplb_8b}, ~\ref{fig:diag_eplb_4b}). These figures further show that \method{} not only achieves the best performance among other RL algorithms, but also has the most stable training, exhibiting the least unexpected increases or decreases in entropy or gradient norm.

\begin{figure*}[t]
    \centering
    \begin{subfigure}[t]{0.49\textwidth}
        \centering
        \includegraphics[width=\linewidth]{figures/plot_entropy_loss_8b_Multi-Evolve.png}
        \caption{Multi-evolve entropy.}
    \end{subfigure}
    \hfill
    \begin{subfigure}[t]{0.49\textwidth}
        \centering
        \includegraphics[width=\linewidth]{figures/plot_grad_norm_8b_Multi-Evolve.png}
        \caption{Multi-evolve gradient norm.}
    \end{subfigure}
    \caption{Multi-Evolve training dynamics of 8B models. ThetaEvolve exhibits large gradient-norm spikes, Max@$k$ steadily collapses entropy, and TTT-Discover remains unstable with repeated entropy collapses. \method{} remains comparatively well conditioned on both metrics.}
    \label{fig:diag_multi_8b}
\end{figure*}

\begin{figure*}[t]
    \centering
    \begin{subfigure}[t]{0.49\textwidth}
        \centering
        \includegraphics[width=\linewidth]{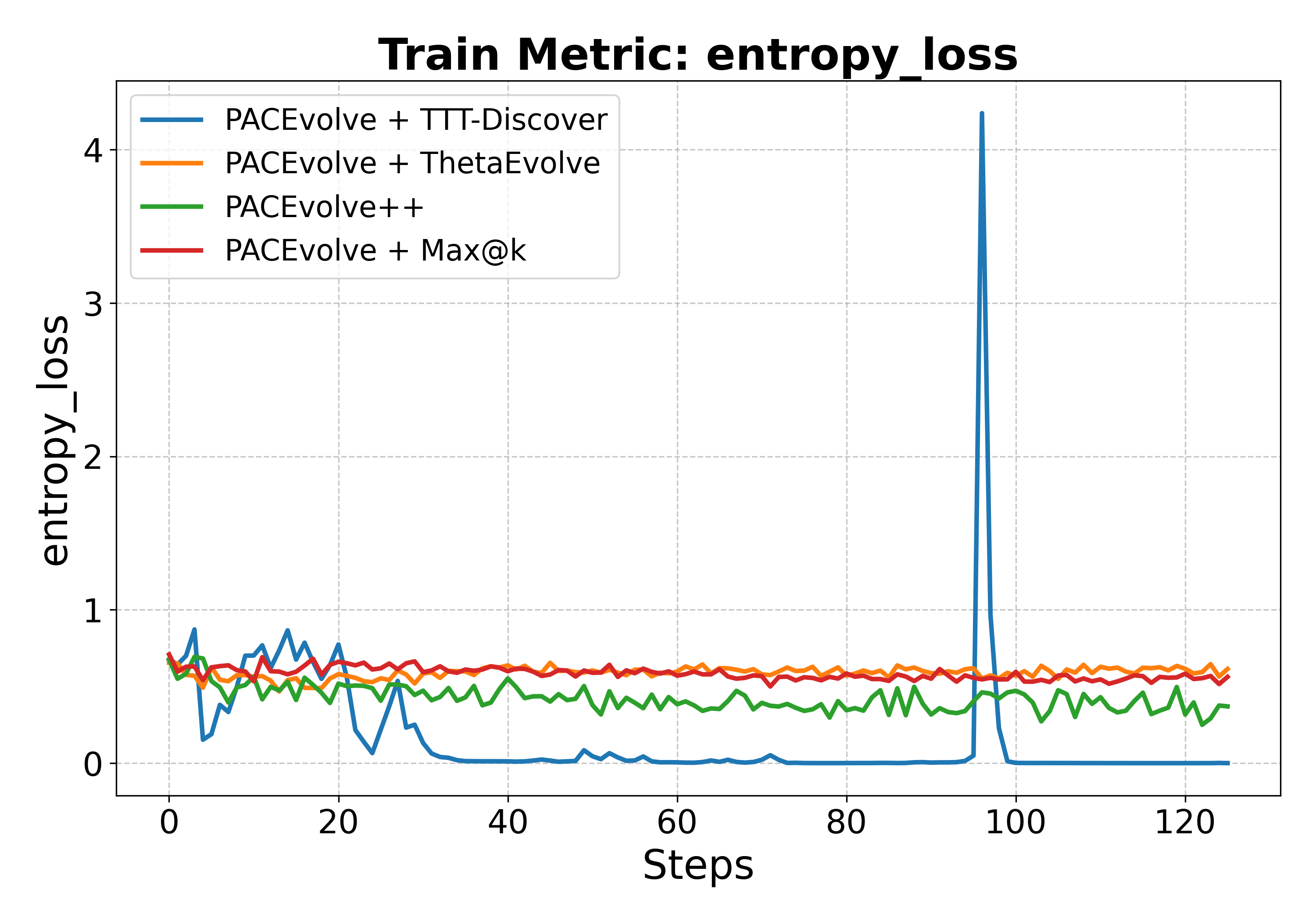}
        \caption{Multi-evolve entropy.}
    \end{subfigure}
    \hfill
    \begin{subfigure}[t]{0.49\textwidth}
        \centering
        \includegraphics[width=\linewidth]{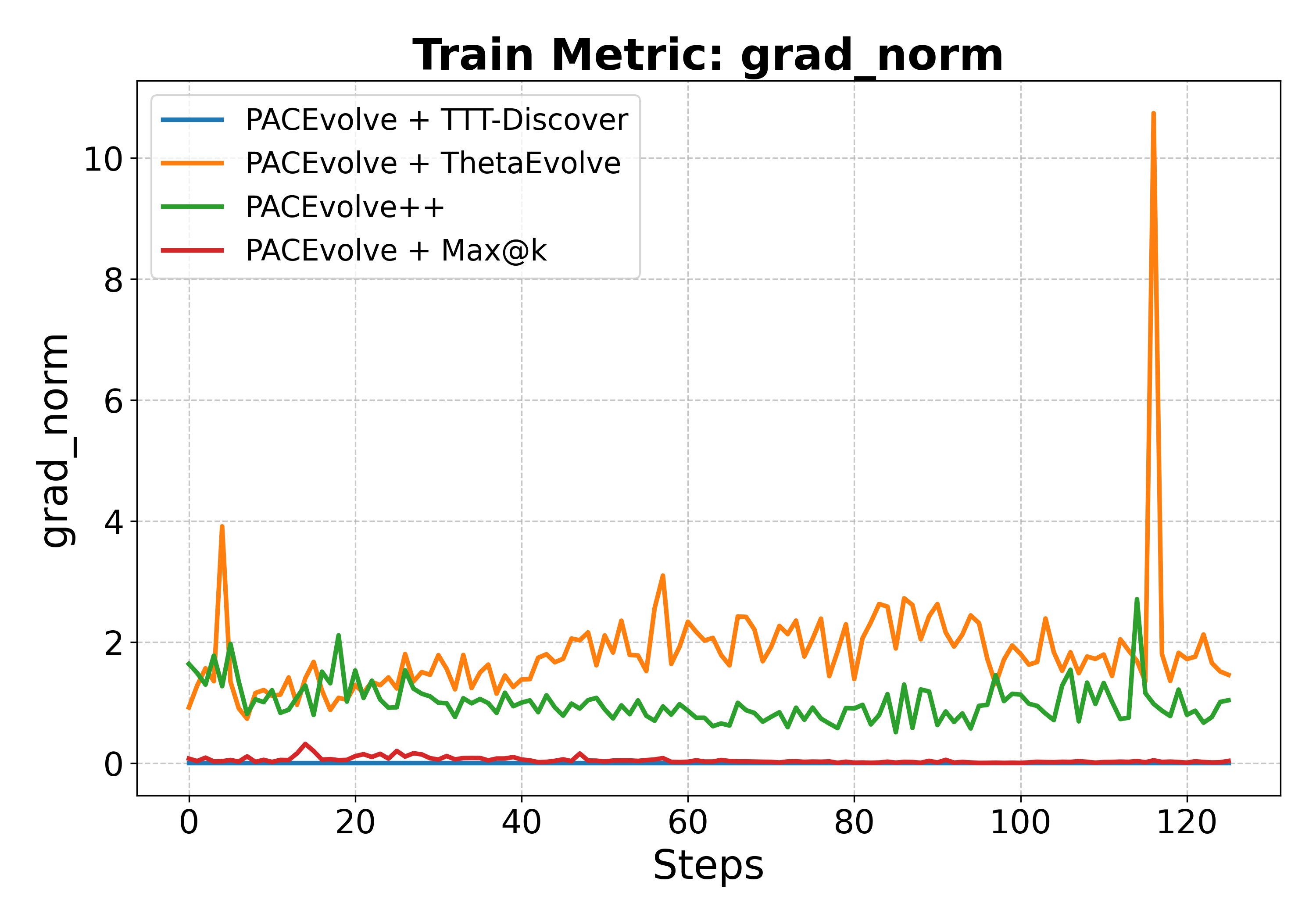}
        \caption{Multi-evolve gradient norm.}
    \end{subfigure}
    \caption{Multi-Evolve training dynamics of 4B models. \method{} remains the most stable on auxiliary metrics.}
    \label{fig:diag_multi_4b}
\end{figure*}

\begin{figure*}[t]
    \centering
    \begin{subfigure}[t]{0.49\textwidth}
        \centering
        \includegraphics[width=\linewidth]{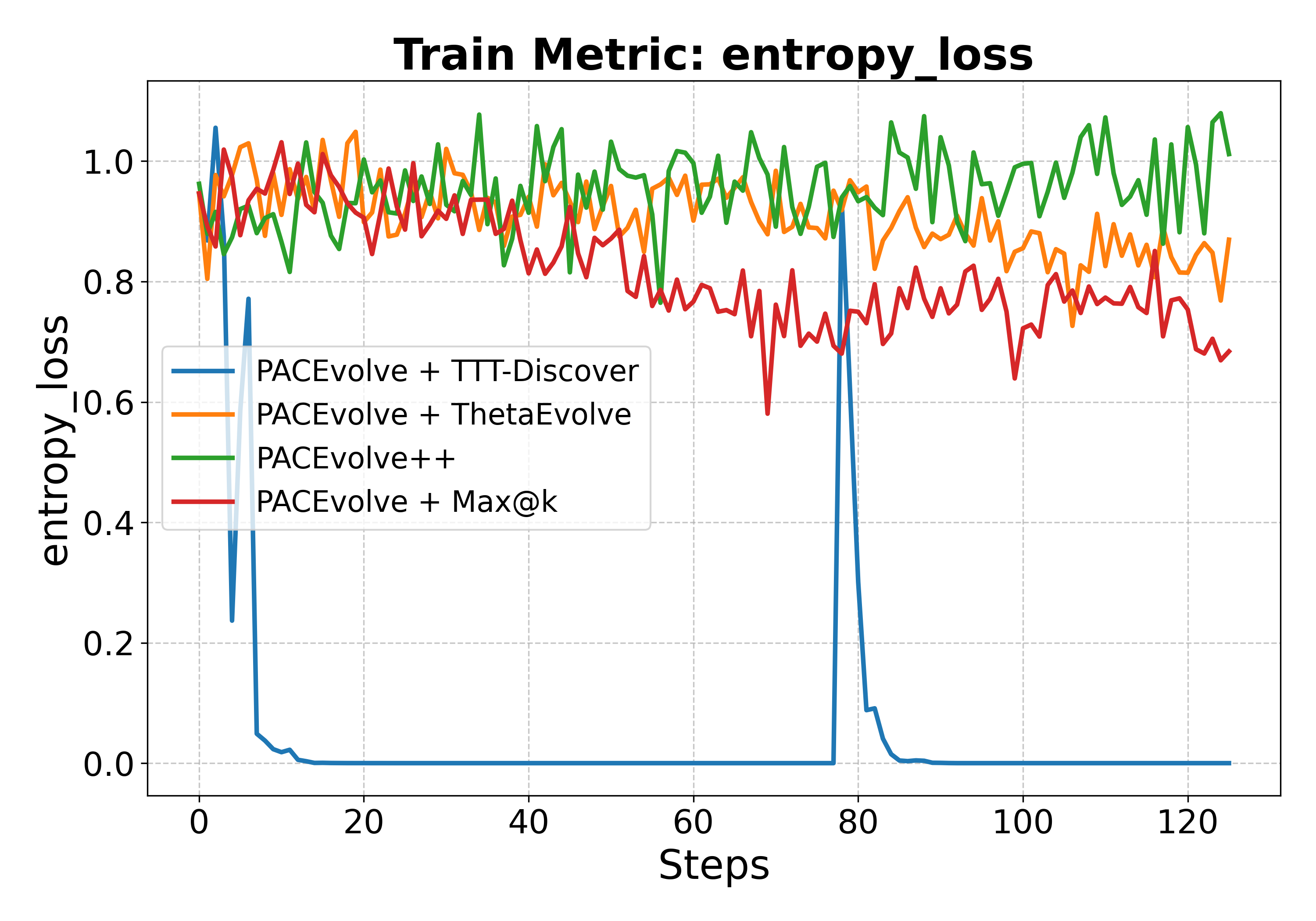}
        \caption{KuaiRec entropy.}
    \end{subfigure}
    \hfill
    \begin{subfigure}[t]{0.49\textwidth}
        \centering
        \includegraphics[width=\linewidth]{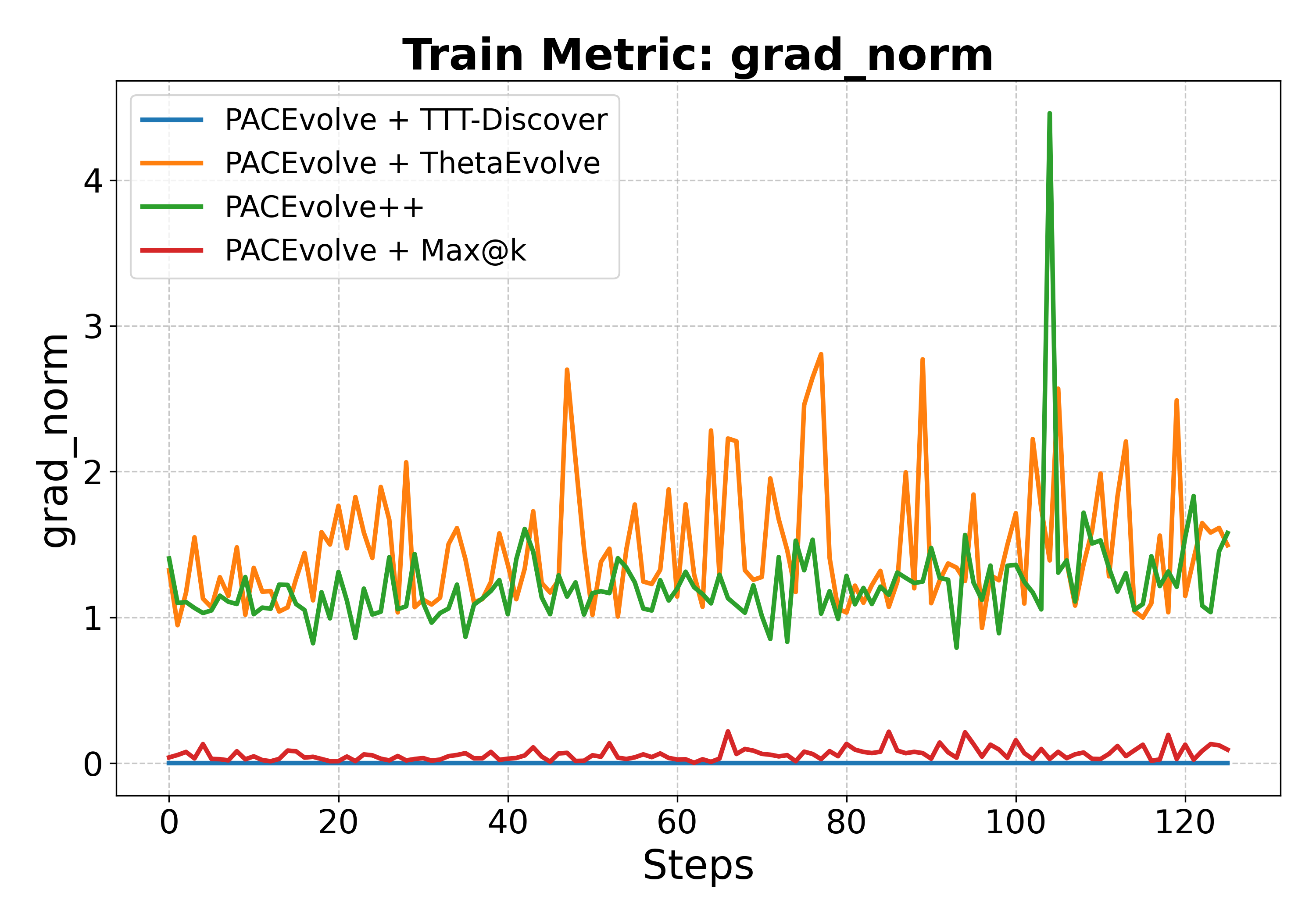}
        \caption{KuaiRec gradient norm.}
    \end{subfigure}
    \caption{KuaiRec training dynamics of 8B models. ThetaEvolve exhibits large gradient-norm spikes, Max@$k$ steadily collapses entropy, and TTT-Discover training collapsed quickly. \method{} remains comparatively well conditioned on both metrics.}
    \label{fig:diag_kuai_8b}
\end{figure*}

\begin{figure*}[t]
    \centering
    \begin{subfigure}[t]{0.49\textwidth}
        \centering
        \includegraphics[width=\linewidth]{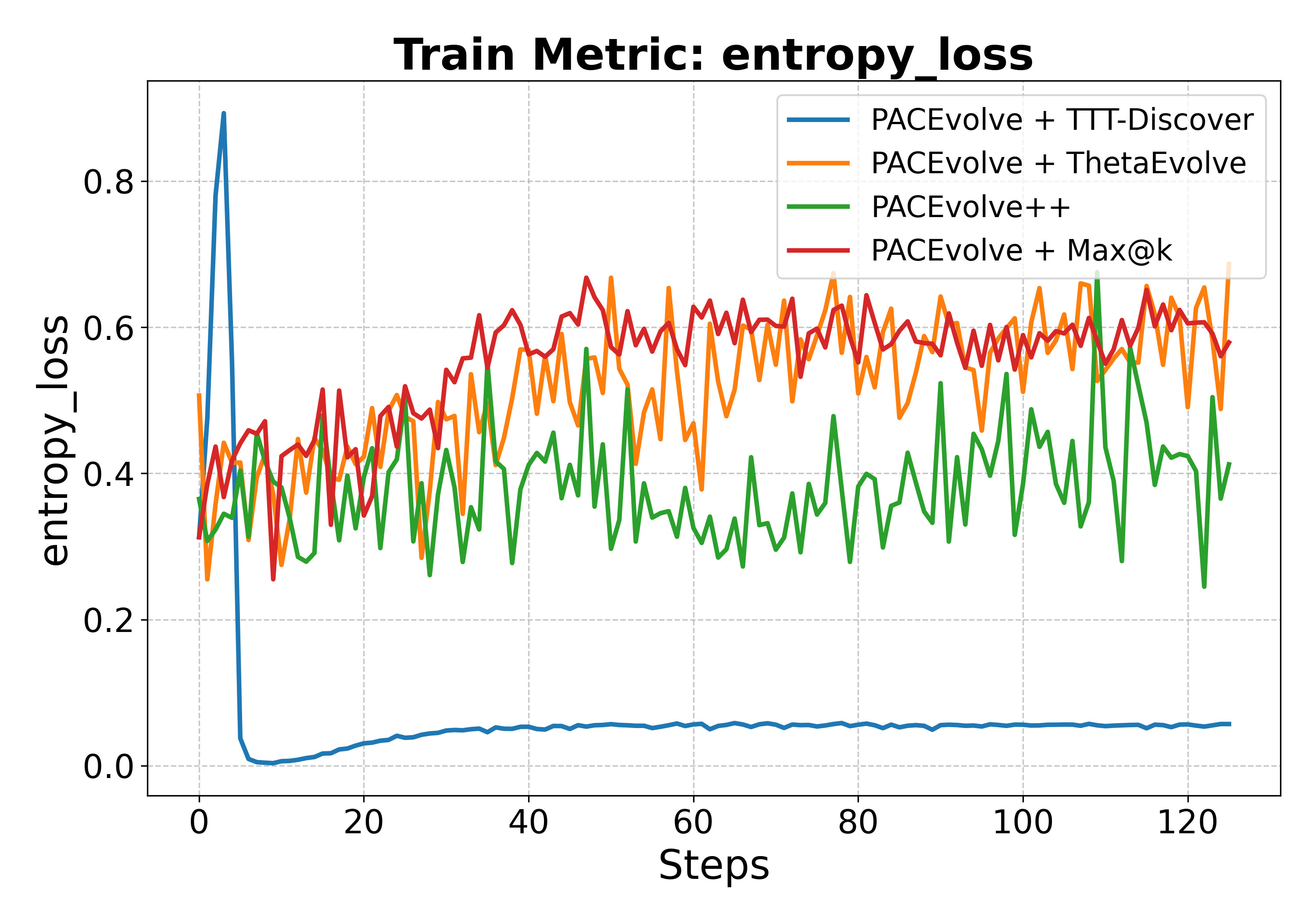}
        \caption{KuaiRec entropy.}
    \end{subfigure}
    \hfill
    \begin{subfigure}[t]{0.49\textwidth}
        \centering
        \includegraphics[width=\linewidth]{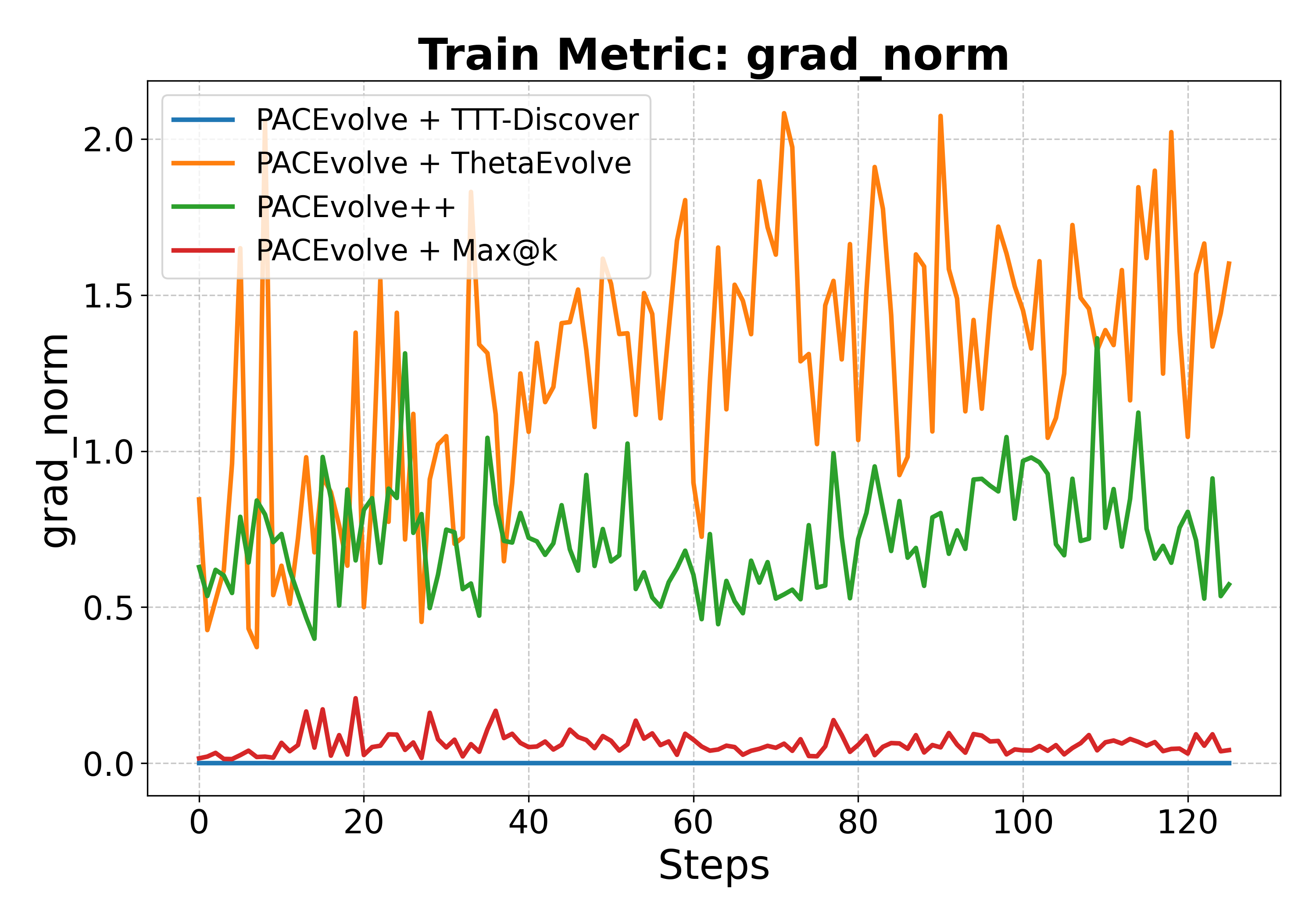}
        \caption{KuaiRec gradient norm.}
    \end{subfigure}
    \caption{KuaiRec training dynamics of 4B models. ThetaEvolve exhibits large gradient-norm spikes, Max@$k$ steadily collapses entropy, and TTT-Discover training collapsed quickly. \method{} remains comparatively well conditioned on both metrics.}
    \label{fig:diag_kuai_4b}
\end{figure*}

\begin{figure*}[t]
    \centering
    \begin{subfigure}[t]{0.49\textwidth}
        \centering
        \includegraphics[width=\linewidth]{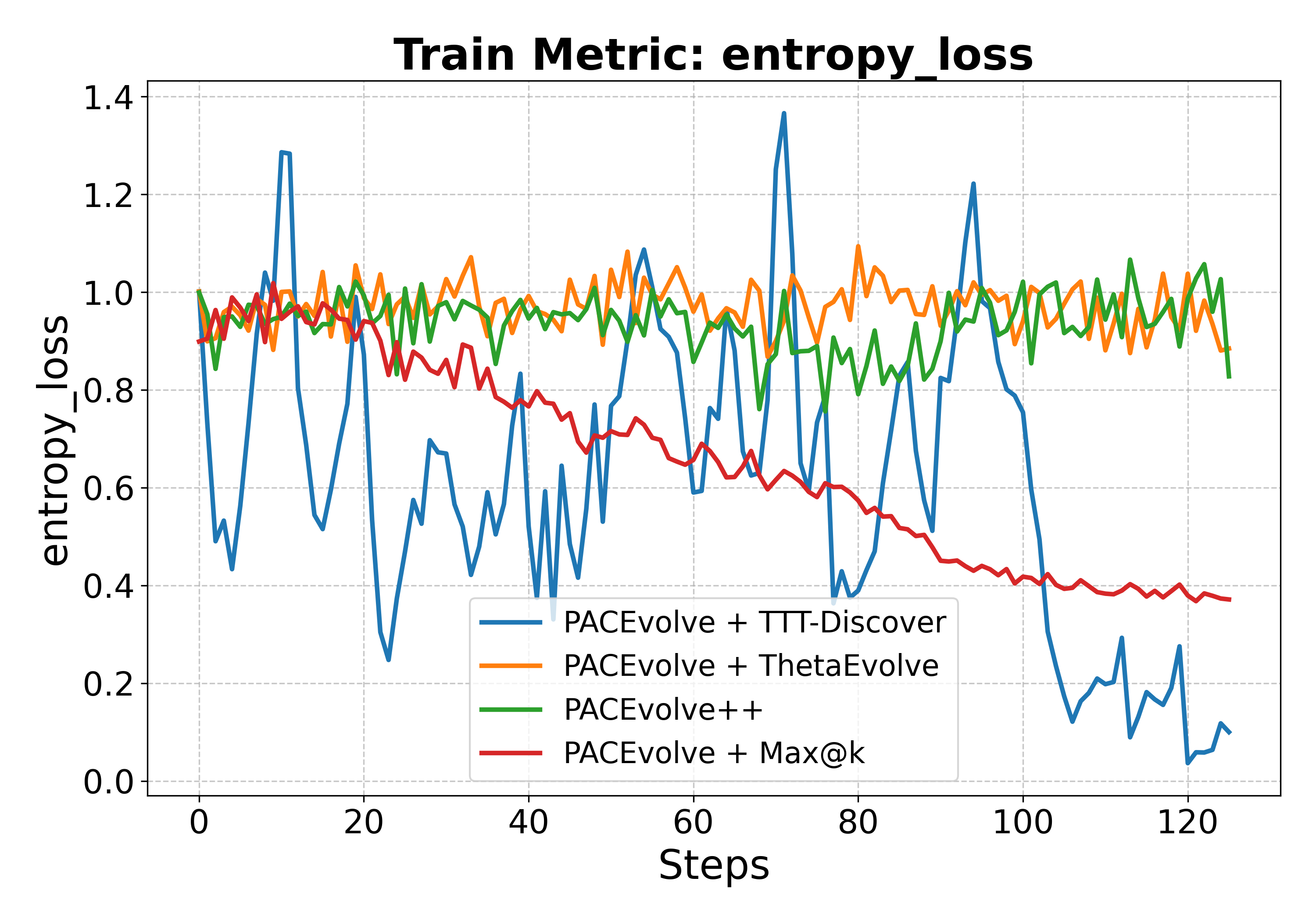}
        \caption{EPLB entropy.}
    \end{subfigure}
    \hfill
    \begin{subfigure}[t]{0.49\textwidth}
        \centering
        \includegraphics[width=\linewidth]{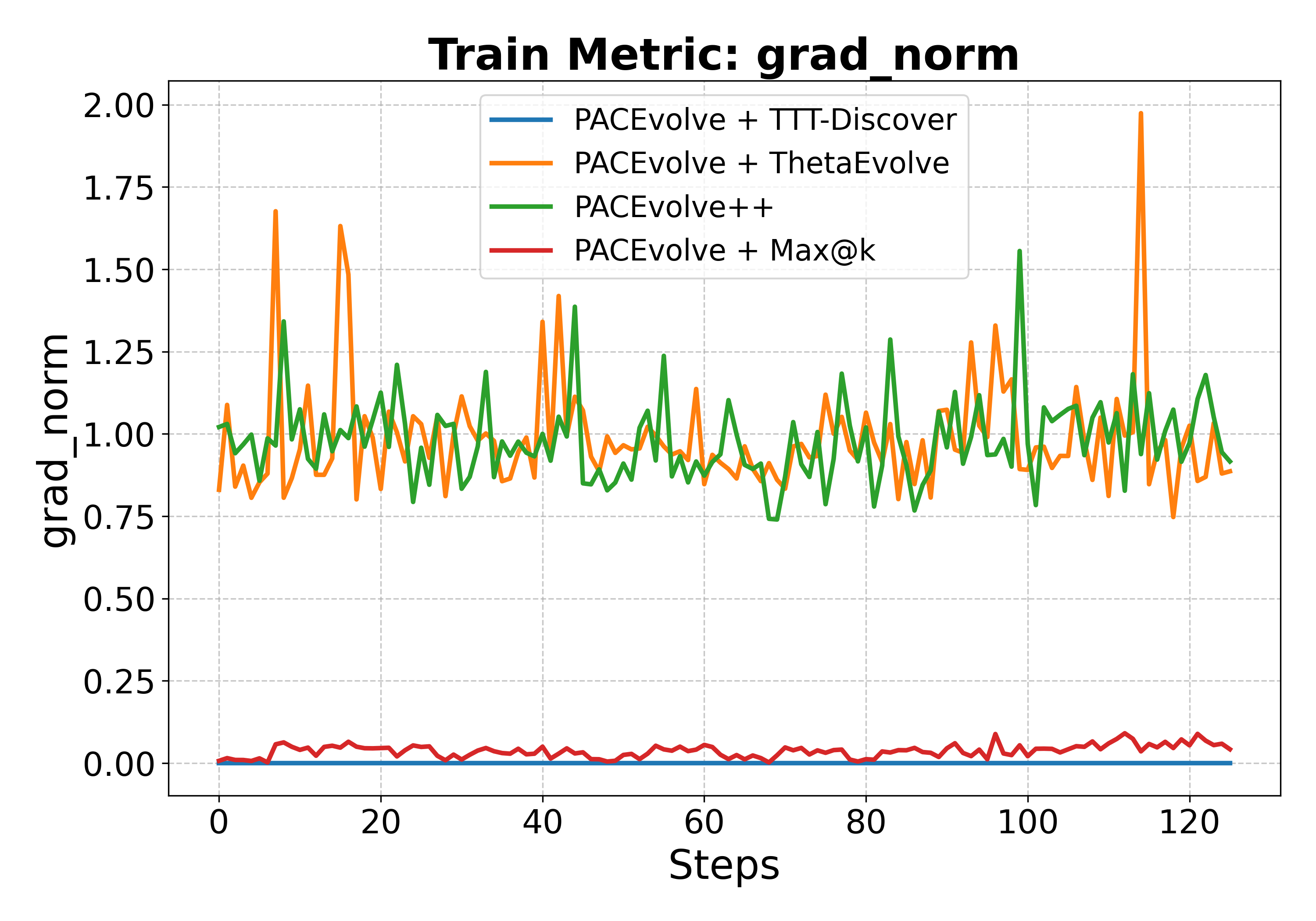}
        \caption{EPLB gradient norm.}
    \end{subfigure}
    \caption{EPLB training dynamics of 8B models. ThetaEvolve exhibits large gradient-norm spikes, Max@$k$ steadily collapses entropy, and TTT-Discover training collapsed quickly. \method{} remains comparatively well conditioned on both metrics.}
    \label{fig:diag_eplb_8b}
\end{figure*}

\begin{figure*}[t]
    \centering
    \begin{subfigure}[t]{0.49\textwidth}
        \centering
        \includegraphics[width=\linewidth]{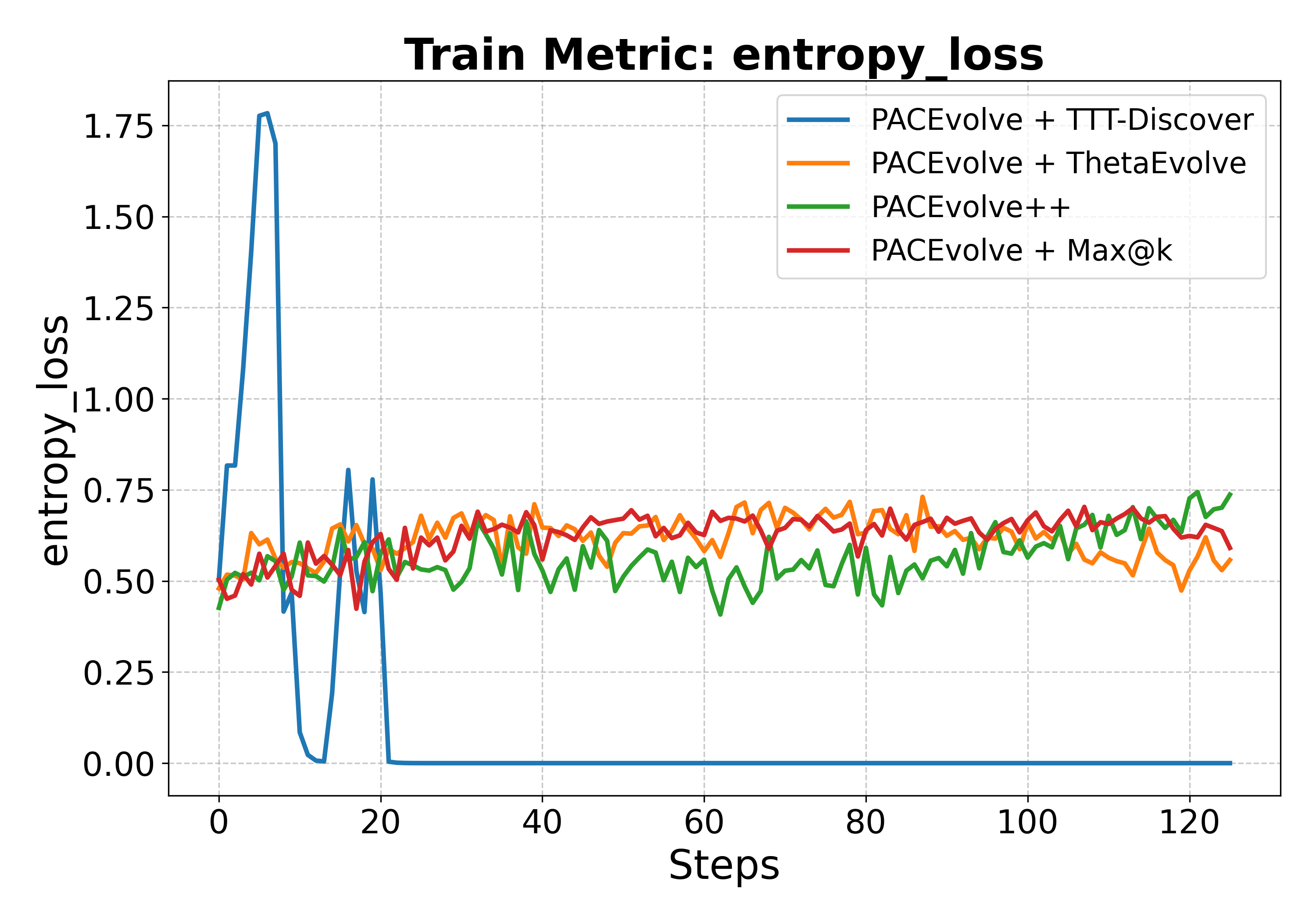}
        \caption{EPLB entropy.}
    \end{subfigure}
    \hfill
    \begin{subfigure}[t]{0.49\textwidth}
        \centering
        \includegraphics[width=\linewidth]{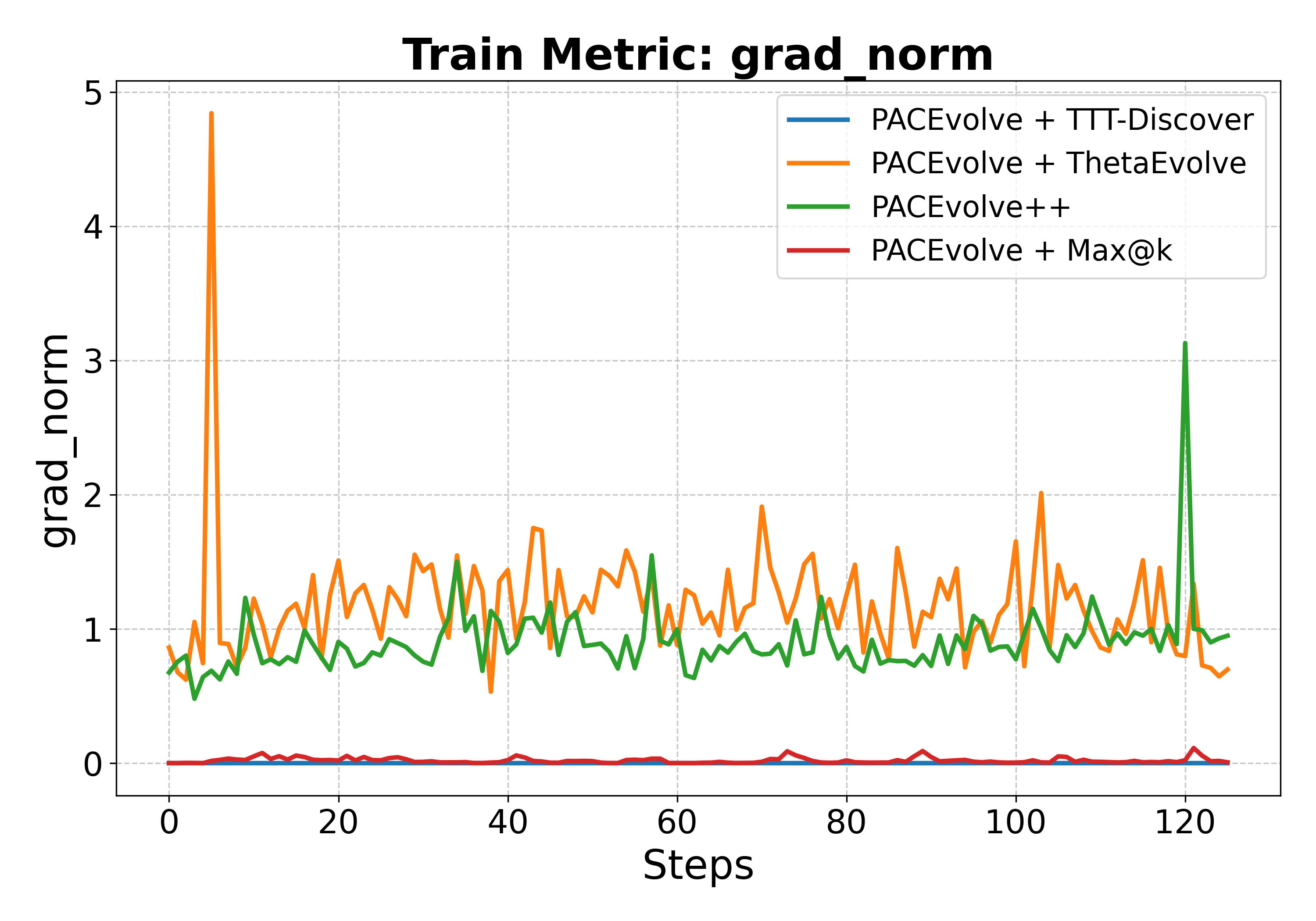}
        \caption{EPLB gradient norm.}
    \end{subfigure}
    \caption{EPLB training dynamics of 4B models. ThetaEvolve exhibits large gradient-norm spikes, Max@$k$ steadily collapses entropy, and TTT-Discover training collapsed quickly. \method{} remains comparatively well conditioned on both metrics.}
    \label{fig:diag_eplb_4b}
\end{figure*}

\clearpage

\subsection{Disaggregated Metrics}\label{appen:disaggre}

In this section, we report disaggregated metrics for the best candidate found by each RL variant. We omit the combined evolution score and the iteration at which the candidate was found. All values are rounded to three decimal places. We abbreviate DeepSeek-R1-0528-Qwen3-8B as DS-R1-Qwen3-8B in the tables. Method abbreviations are TTT-D for TTT-Discover, PACE++ for \method{}, and Max@$k$ for the PKPO objective.

\begin{table}[H]
\caption{Disaggregated EPLB metrics. Bal. denotes balancedness.}
\label{tab:eplb_disaggregated}
\centering
\begin{tabular}{lcccc}
\toprule
\multirow{2}{*}{Method} & \multicolumn{2}{c}{Qwen3.5-4B} & \multicolumn{2}{c}{DS-R1-Qwen3-8B} \\
\cmidrule(lr){2-3} \cmidrule(lr){4-5}
 & Bal. & Speed & Bal. & Speed \\
\midrule
TTT-D & 0.446 & 0.054 & 0.443 & 0.050 \\
GRPO & 0.443 & 0.055 & 0.478 & 0.056 \\
PACE++ & 0.489 & 0.053 & 0.510 & 0.035 \\
Max@$k$ & 0.407 & 0.040 & 0.458 & 0.080 \\
\bottomrule
\end{tabular}
\end{table}

\begin{table}[H]
\caption{Disaggregated KuaiRec metrics. N@10/50 denotes NDCG@10/50 and H@10/50 denotes HR@10/50.}
\label{tab:kuairec_disaggregated}
\centering
\resizebox{\textwidth}{!}{%
\begin{tabular}{lcccccccccc}
\toprule
\multirow{2}{*}{Method} & \multicolumn{5}{c}{Qwen3.5-4B} & \multicolumn{5}{c}{DS-R1-Qwen3-8B} \\
\cmidrule(lr){2-6} \cmidrule(lr){7-11}
 & N@10 & N@50 & H@10 & H@50 & MRR & N@10 & N@50 & H@10 & H@50 & MRR \\
\midrule
TTT-D & 0.073 & 0.101 & 0.116 & 0.250 & 0.069 & 0.103 & 0.134 & 0.155 & 0.302 & 0.097 \\
GRPO & 0.101 & 0.133 & 0.153 & 0.304 & 0.095 & 0.107 & 0.138 & 0.159 & 0.303 & 0.101 \\
PACE++ & 0.102 & 0.133 & 0.154 & 0.302 & 0.096 & 0.110 & 0.145 & 0.166 & 0.327 & 0.104 \\
Max@$k$ & 0.099 & 0.130 & 0.156 & 0.300 & 0.091 & 0.104 & 0.136 & 0.156 & 0.309 & 0.098 \\
\bottomrule
\end{tabular}%
}
\end{table}

\begin{table}[H]
\caption{Disaggregated Multi-Evolve metrics. P@5 denotes Precision@5.}
\label{tab:multievolve_disaggregated}
\centering
\begin{tabular}{lcccc}
\toprule
\multirow{2}{*}{Method} & \multicolumn{2}{c}{Qwen3.5-4B} & \multicolumn{2}{c}{DS-R1-Qwen3-8B} \\
\cmidrule(lr){2-3} \cmidrule(lr){4-5}
 & Pearson $r$ & P@5 & Pearson $r$ & P@5 \\
\midrule
TTT-D & 0.703 & 0.430 & 0.703 & 0.415 \\
GRPO & 0.675 & 0.449 & 0.688 & 0.456 \\
PACE++ & 0.726 & 0.440 & 0.733 & 0.452 \\
Max@$k$ & 0.630 & 0.430 & 0.697 & 0.448 \\
\bottomrule
\end{tabular}
\end{table}

These disaggregated results show that methods might be exploring different fronts along the Pareto-optimal curve. While a higher combined score generally signals stronger overall solutions, it does not guarantee Pareto dominance. For example, EPLB exposes a trade-off between balance and speed, KuaiRec separates short-horizon ranking quality from broader hit-rate coverage, and Multi-Evolve separates correlation from top-ranked mutant precision.

\clearpage

\section{Prompt templates} \label{appen:prompts}
The following are the prompt templates used for advisor and code implementation (Replace task-related information to deploy on other tasks). Texts in \textcolor{red}{red} represent task-specific placeholders; texts in \textcolor{blue}{blue} represent dynamic context managed during the evolutionary search.

\begin{promptbox}[Idea Generation Prompt Template]
\ttfamily
We are conducting an evolutionary optimization process for the Expert Parallelism Load Balancer (EPLB).

\textcolor{red}{{BACKGROUND}}

\textcolor{red}{{TASK INTRO}}

\textcolor{red}{{CODING REQ}}

Current state-of-the-art

The current state-of-the-art solution is as follows:

\textcolor{blue}{{SoTA Solution}}

Idea Repo

Idea repos contain the ideas we have generated so far and the experiments we have run to test these hypotheses.

\textcolor{blue}{{Idea Repo}}

Your Task

When proposing a new design, you should start by conducting a research brainstorming exercise to develop 3 options to explore the design space. Go through each option and provide a comprehensive explanation of the proposed changes.

Go through the idea and experiment history carefully; DO NOT re-propose an idea that has already been well tested.

You should follow the following format when generating ideas:

Idea 1 

Hypothesis: 

Reasoning:

Idea 2 

Hypothesis: 

Reasoning:

Idea 3 

Hypothesis: 

Reasoning: 
\end{promptbox}

\begin{promptbox}[Idea Selection Prompt Template]
\ttfamily
We are conducting an evolutionary optimization process for the Expert Parallelism Load Balancer (EPLB).

\textcolor{red}{{BACKGROUND}}

\textcolor{red}{{TASK INTRO}}

\textcolor{red}{{CODING REQ}}

Current state-of-the-art

The current state-of-the-art solution is as follows:

\textcolor{blue}{{SoTA Solution}}

Idea Repo

Idea repos contain the ideas we have generated so far and the experiments we have run to test these hypotheses.

\textcolor{blue}{{Idea Repo}}

Your Task

Your job is to come up with an experiment to test one of the ideas in the idea repo. Think about an experiment to run that will have the best shot of helping you accomplish your overall goal.

You should use the following format for the idea selection part:

Idea ID: 
Experiment description: 
\end{promptbox}

\begin{promptbox}[Code Implementation Prompt Template]
\ttfamily
We are conducting an evolutionary optimization process for the Expert Parallelism Load Balancer (EPLB).

\textcolor{red}{{BACKGROUND}}

\textcolor{red}{{TASK INTRO}}

\textcolor{red}{{CODING REQ}}

Current state-of-the-art

The current state-of-the-art solution is as follows:

\textcolor{blue}{{SoTA Solution}}

Your Task

Your job is to implement the selected idea above.

\textcolor{blue}{Idea ID: {Idea ID}}

\textcolor{blue}{Experiment description: {Experiment description}}

\textcolor{blue}{Selected idea: {Selected Idea}}
\end{promptbox}

\section{Training stability}
\label{appen:training_stability}
\newcommand{\R}{\mathbb{R}}

We analyze the scale-conditioned advantages of the hybrid objective when absolute reward differences compress, but relative ordering is preserved. This corresponds to late-stage evolution, where candidate solutions become local variants of already strong programs. The goal is to understand what standardization preserves: not an absolute reward scale, but a bounded credit-assignment geometry.

Let \(r_1,\ldots,r_N \in \R\) be a non-constant reward profile with mean
\(\bar r = \frac{1}{N}\sum_{i=1}^N r_i\) and standard deviation
\(\sigma_r > 0\). Assume strict ordering (i.e.,
\(r_i \neq r_j\) for \(i \neq j\)) and \(2 \leq k \leq N\).

For any offset \(c \in \R\) and scale \(\delta > 0\), define a scaled reward batch
\[
g_i^{(\delta)} = c + \delta r_i.
\]

This construction models \emph{reward compression}: as \(\delta\) becomes small, rewards become closer together while their rankings remain unchanged.

Then the ordering is preserved:
\[
r_i > r_j \iff g_i^{(\delta)} > g_j^{(\delta)}.
\]

Define the raw group-relative branch
\[
A_i^{G}(\delta)
=
g_i^{(\delta)} - \bar g^{(\delta)},
\]
and the SLOO\(_{k-1}\) weight
\[
w_i^{\mathrm{SLOO}}(\delta)
=
\frac{1}{\binom{N}{k}}
\sum_{\substack{I \subseteq \{1,\ldots,N\}\\ |I| = k,\ i \in I}}
\left(
\max_{j \in I} g_j^{(\delta)}
-
\max_{b \in I \setminus \{i\}} g_b^{(\delta)}
\right).
\]
For any non-constant branch vector \(B=(B_1,\ldots,B_N)\), define its scale-conditioned version
\[
\Phi_{\epsilon_{\mathrm{num}}}(B)_i
=
\frac{B_i-\mu(B)}{\sigma(B)+\epsilon_{\mathrm{num}}}.
\]

Then:

\begin{align}
A_i^{G}(\delta)
&=
\delta (r_i - \bar r), \\[4pt]
\|A^{G}(\delta)\|_2^2
&=
N \delta^2 \sigma_r^2, \\[6pt]
w_i^{\mathrm{SLOO}}(\delta)
&=
\delta\, w_i^{\mathrm{SLOO}}(1), \\[4pt]
\|w^{\mathrm{SLOO}}(\delta)\|_2^2
&=
\delta^2\, \|w^{\mathrm{SLOO}}(1)\|_2^2.
\end{align}

\begin{proof}[Proof of Theorem~\ref{thm:reward-collapse}]

We prove the result in three steps: first, deriving the scale-conditioned forms; second, showing boundedness; and finally, characterizing the SLOO credit geometry.

\medskip

We first analyze how the group-relative branch changes when rewards are scaled.

Since \(g_i^{(\delta)} = c + \delta r_i\), we compute:
\[
\bar g^{(\delta)} = c + \delta \bar r,
\qquad
\sigma(g^{(\delta)}) = \delta \sigma_r.
\]

We note that adding a constant shifts the mean but does not affect variance, while scaling by \(\delta\) scales the standard deviation linearly.

Substituting into the standardization operator gives
\[
\Phi_{\epsilon_{\mathrm{num}}}(A^G(\delta))_i
=
\frac{\delta (r_i - \bar r)}{\delta \sigma_r + \epsilon_{\mathrm{num}}}.
\]

Taking the squared norm:
\[
\|\Phi_{\epsilon_{\mathrm{num}}}(A^G(\delta))\|_2^2
=
\frac{N \delta^2 \sigma_r^2}{(\delta \sigma_r + \epsilon_{\mathrm{num}})^2}.
\]

For the raw group-relative signal, since it only centers the rewards, we obtain directly:
\[
A_i^{G}(\delta)
=
g_i^{(\delta)} - \bar g^{(\delta)}
=
\delta (r_i - \bar r),
\]
and thus
\[
\|A^{G}(\delta)\|_2^2
=
N \delta^2 \sigma_r^2.
\]

\medskip

We now analyze how the SLOO estimator behaves under the same transformation.

The key observation is that the \(\max\) operator has two properties:

1. \emph{Translation equivariance:} \(\max(c + x_i) = c + \max(x_i)\)

2. \emph{Positive homogeneity:} \(\max(\delta x_i) = \delta \max(x_i)\) for \(\delta > 0\)

Applying these to any subset \(I\), we obtain:
\[
\max_{j \in I} g_j^{(\delta)}
=
c + \delta \max_{j \in I} r_j,
\]
\[
\max_{b \in I \setminus \{i\}} g_b^{(\delta)}
=
c + \delta \max_{b \in I \setminus \{i\}} r_b.
\]

Subtracting, the constant \(c\) cancels:
\[
\max_{j \in I} g_j^{(\delta)}
-
\max_{b \in I \setminus \{i\}} g_b^{(\delta)}
=
\delta
\left(
\max_{j \in I} r_j
-
\max_{b \in I \setminus \{i\}} r_b
\right).
\]

Therefore, SLOO depends on winner-changing margins, and its raw signal scales linearly with \(\delta\).

Averaging over subsets yields
\[
w_i^{\mathrm{SLOO}}(\delta)
=
\delta\, w_i^{\mathrm{SLOO}}(1),
\]
and thus
\[
\|w^{\mathrm{SLOO}}(\delta)\|_2^2
=
\delta^2\, \|w^{\mathrm{SLOO}}(1)\|_2^2.
\]
Applying \(\Phi_{\epsilon_{\mathrm{num}}}\) gives
\[
\Phi_{\epsilon_{\mathrm{num}}}(w^{\mathrm{SLOO}}(\delta))_i
=
\frac{\delta\left(w_i^{\mathrm{SLOO}}(1)-\mu(w^{\mathrm{SLOO}}(1))\right)}
{\delta\sigma(w^{\mathrm{SLOO}}(1))+\epsilon_{\mathrm{num}}}.
\]

\medskip

We now show boundedness. For any non-constant branch \(B\),
\[
\|\Phi_{\epsilon_{\mathrm{num}}}(B)\|_2^2
=
\frac{N\sigma(B)^2}{(\sigma(B)+\epsilon_{\mathrm{num}})^2}
\leq N.
\]
This applies to both \(A^G(\delta)\) and \(w^{\mathrm{SLOO}}(\delta)\). Moreover, if \(B_i^{(\delta)}=\delta B_i^{(1)}\), then \(\Phi_{\epsilon_{\mathrm{num}}}(B^{(\delta)})\) approaches the z-score vector \((B_i^{(1)}-\mu(B^{(1)}))/\sigma(B^{(1)})\) whenever \(\delta\sigma(B^{(1)})\gg\epsilon_{\mathrm{num}}\), and approaches zero whenever \(\delta\sigma(B^{(1)})\ll\epsilon_{\mathrm{num}}\). Thus, the scale-conditioned branch is bounded in the diverse regime and naturally becomes uninformative in the collapsed regime, where our implementation skips the update.

\medskip

It remains to characterize what the SLOO branch preserves after scale conditioning. For any subset \(I\) containing \(i\), the term
\[
\max_{j\in I} r_j - \max_{b\in I\setminus\{i\}} r_b
\]
is positive if and only if \(i\) is the highest-reward element in \(I\). Otherwise, removing \(i\) does not change the subset maximum, and the term is zero. If responses are ranked in decreasing reward order and \(i\) has rank \(m\), then \(i\) can be the winner only in subsets whose other \(k-1\) elements are drawn from the \(N-m\) lower-ranked responses. Therefore, the bottom \(k-1\) responses cannot win any size \(k\) subset and receive zero raw SLOO contribution. Standardization is an affine transform with positive scale, so it preserves the ordering induced by these SLOO frontier-contribution scores.

\medskip

The theorem follows. The group-relative branch provides dense, centered reward credit, which is useful when early-stage rollout groups are diverse. The SLOO branch gives frontier-contribution credit, retaining the same ordering of candidates after scale conditioning and aligning better with late-stage best-of-\(k\) evolutionary survival.

\end{proof}


\end{document}